\documentclass[12pt]{article}

\usepackage{scicite}
\usepackage[flushleft]{threeparttable}
\usepackage{longtable}
\usepackage{times}
\usepackage[utf8]{inputenc}
\usepackage{amsfonts}       
\usepackage{nicefrac}       
\usepackage{microtype}      
\usepackage{amssymb,amsmath,amsthm,mathtools,bbm}
\usepackage{algorithm,algpseudocode}
\usepackage{amsthm}
\usepackage{subcaption}
\usepackage{graphicx}
\usepackage{blindtext}
\usepackage{capt-of}
\usepackage{booktabs}
\usepackage{url}
\usepackage{varwidth}
\usepackage{authblk}
\usepackage{adjustbox}
\usepackage{tabularx}
\usepackage{caption} 
\usepackage{array} 
\usepackage{xcolor}
\usepackage{rotfloat}
\usepackage[left]{lineno}
\usepackage{threeparttable}
\usepackage{makecell}
\usepackage{multirow} 
\usepackage{csquotes}
\usepackage[normalem]{ulem}
\usepackage{comment}

\newtheorem{theorem}{Theorem}

\newenvironment{sciabstract}{
\begin{quote} \bf}
{\end{quote}}

\title{Improving Access to Essential Medicines via Decision-Aware Machine Learning}

\author{Angel Tsai-Hsuan Chung,$^{1}$ Patrick Bayoh,$^{3}$ Jatu Abdulai,$^{3}$ Lawrence Sandi,$^{3}$ Francis Smart,$^{4}$ Hamsa Bastani,$^{1,\ast}$ Osbert Bastani$^{2,\ast}$\\
\normalsize{$^{1}$Department of Operations, Information, and Decisions, University of Pennsylvania,}\\
\normalsize{Jon M. Huntsman Hall, 3730 Walnut St, Philadelphia, PA 19104, USA}\\
\normalsize{$^{2}$Department of Computer and Information Science, University of Pennsylvania,}\\
\normalsize{3330 Walnut St, Philadelphia, PA 19104, USA}\\
\normalsize{$^{3}$National Medical Supplies Agency, Sierra Leone}\\
\normalsize{31 Murray Town Road, Freetown, Sierra Leone}\\
\normalsize{$^{4}$Ministry of Health and Sanitation, Sierra Leone}\\
\normalsize{Youyi Building Freetown, Freetown, Sierra Leone}\\
\normalsize{$^\ast$To whom correspondence should be addressed; E-mail:  hamsab@wharton.upenn.edu, obastani@seas.upenn.edu.}
}

\date{}

\begin{document}

\baselineskip24pt

\maketitle

\begin{sciabstract}
A critical challenge in healthcare systems in Low- and Middle-Income Countries (LMICs) is the efficient and equitable allocation of scarce resources, particularly essential medicines~\cite{yadav2015health}. This problem is complicated by limited high-quality data, which restricts the applicability of traditional data-driven techniques~\cite{unicef2015state, gallien2021inventory,merkuryeva2019demand,yenet2023challenges}. We propose a novel decision-aware machine learning framework for essential medicines allocation, which additionally leverages multi-task learning to ensure sample efficiency and catalytic priors to ensure equitable allocation. In collaboration with the Sierra Leone national government, we performed a staggered, nationwide deployment of our system as a decision support tool. Our evaluation finds an estimated 19\% increased consumption of allocated products in treated districts, demonstrating its efficacy at improving access to essential medicines. Our tool was subsequently scaled nationwide, covering an estimated 2 million women and children under five. Our work demonstrates how machine learning methods can improve efficiency at very low cost in resource-constrained global health settings.
\end{sciabstract}

\section{Introduction}

Machine learning has demonstrated enormous potential for improving healthcare, with applications ranging from screening and diagnosis~\cite{de2018clinically, esteva2019guide, ngiam2019big}, targeted testing~\cite{bastani2021efficient}, and automating medical record generation~\cite{yang2022large}. Success stories have largely occurred in developed nations, which have high-quality data available to train predictive models. Yet, machine learning has unique potential to help alleviate challenges caused by infrastructure limitations in developing nations, such as poor stock management systems~\cite{zuma2019challenges} and insufficient staff training in logistics and inventory management~\cite{gallien2021inventory, yadav2015health}. These limitations can introduce operational challenges that make it difficult to match supply to demand, leading to shortages of common medicines and medical supplies taken for granted in developed nations. An assessment by the World Health Organization (WHO) found that on average across a representative sample of health facilities in several African countries (including Sierra Leone), less than 40\% of maternal essential medicines were readily available~\cite{whosara2018,droti2019poor}. This scarcity can force vulnerable patients to pay inflated private-sector prices or forgo treatment~\cite{yenet2023challenges}. While there have been efforts to address these challenges through traditional methodologies for improving operations~\cite{tetteh2009creating, terwiesch2006matching, fenta2017human, johnson2021inventory}, recent work demonstrates that the impact of these efforts are limited by issues such as misalignment between digital systems and supply chain operations~\cite{mwashiuya2025effect} or due to insufficient technical expertise and infrastructure to manage complex supply chains~\cite{braa2007building}.

We focus on Sierra Leone, which launched the Free Health Care Initiative (FHCI) in 2010, one of the largest healthcare initiatives and a top priority in its post-civil war recovery~\cite{donnelly2011did}. This program provides free medical care and products to pregnant women and children under five. However, these essential medicines are only distributed to healthcare facilities once a quarter, which can lead to significant shortages in one location even if there is adequate supply in other locations. Efforts to design more adaptive supply chains have failed due to logistical challenges. For example, inspired by The Coca-Cola Company's highly effective supply chain, Project Last Mile (PLM) operated since 2018 with funding from the United States Agency for International Development (USAID) to build software to digitize supply chain data, with the goal of supporting more frequent and targeted restocking. However, despite operating for half a decade, their systems have been implemented in only about 8\% of facilities (15 government hospitals and 103 health facilities) as of 2023 due to logistical challenges~\cite{linnander2017process,linnander2018mixed,projectlastmile2023}.

In this context, a low-cost and promising avenue to scalably reduce shortages is to better match limited supply with patient demand using a combination of prediction and optimization~\cite{zhu2021demand, mbonyinshuti2021prediction, bastani2024optimizing}. In particular, given accurate facility-level demand forecasts for a product, existing supply chain optimization techniques~\cite{bertsimas2006robust,yang2021multilocation,aviv2001effect} optimize facility-level allocations to minimize unmet patient demand.

However, accurately forecasting demand is difficult because of its high variability, both due to demand shocks caused by natural disasters or disease outbreaks~\cite{unicef2015state, yadav2015health, yenet2023challenges} and periodic trends caused by seasonal diseases such as malaria. Existing strategies employed in global health contexts include relying on historical consumption data, morbidity-based forecasting, or proxy estimates~\cite{USAID2014HealthCommodities, Kenya2016FamilyPlanning}; however, these approaches tend to have limited accuracy due to demand variability. Furthermore, due to the lack of modern computing infrastructure, decision-makers in practice rely primarily on ad-hoc manual forecasting or on Excel spreadsheets with very high forecasting errors~\cite{merkuryeva2019demand,fredrick2016factors}.

Modern machine learning tools provide a promising path to improve the accuracy of demand forecasting~\cite{ban2019big}. By leveraging high-dimensional covariates, they can predict complex trends in demand that are beyond the reach of simpler methodologies. The key obstacle to realizing the promise of these approaches is the lack of training data. What little data is available has significant rates of missing or unreliable entries due to the same infrastructure limitations that cause operational challenges. For instance, in Sierra Leone, the only digitized data that can be used to train facility-level demand forecasting models is monthly consumption data recorded in the District Health Information Software 2 (DHIS2) system (used by 53 of the 54 countries in Africa)~\cite{dhis2}. This data has only been collected systematically across all health facilities since 2020; furthermore, it has a high rate of missing or unreliable entries. These limitations make it difficult to train accurate predictive models based on high-dimensional covariates.

In this paper, we introduce a novel machine learning framework for constrained resource allocation that satisfies two key criteria: (1) it makes effective use of limited and noisy historical data, and (2) it is scalable and can be deployed in limited-compute environments. At a high level, our system leverages machine learning to predict demand based on features constructed from historical data, and then applies stochastic optimization to compute the best allocation based on this model's predictions. To tackle data scarcity and quality challenges, our system uses a multi-task learning strategy to share data across different healthcare facilities~\cite{caruana1997multitask} along with a novel decision-aware learning algorithm~\cite{donti2017task,elmachtoub2022smart,bertsimas2020predictive,wilder2019melding, wang2020automatically, shahdecision} that aligns the prediction loss with the downstream allocation objective. Furthermore, it uses catalytic priors~\cite{huang2020catalytic} with auxiliary data sources to mitigate data inequity (i.e., where poorer facilities have lower-quality data and therefore noisier forecasts).

In close collaboration with the Sierra Leone national government, we deployed our system nationwide as a decision support tool to improve allocation decisions for Sierra Leone's FHCI policy. Our system was first piloted in 5 randomly selected districts, enabling us to econometrically evaluate our deployment. A Synthetic Differences-in-Differences analysis~\cite{arkhangelsky2021synthetic} found that consumption increased by 19\% in the 5 treated districts, suggesting that our system significantly improved patient access to essential medicines it helped allocate. Following the pilot, our system has been scaled nationwide, covering an estimated 2 million women and children under five~\cite{WorldBank2025,UNICEF_2025_SierraLeoneData}.
Importantly, these gains were extremely cost-effective, requiring only a \$30 monthly server fee and no additional workforce (see Supplement~\S\ref{sup:cost} for a cost-effectiveness analysis). These results highlight how novel data-efficient machine learning can address critical challenges in healthcare delivery in resource-constrained settings.

\section{Essential Medicines Allocation System}
\label{allocation}

Maternal mortality remains one of the most pressing global health challenges, particularly in developing countries where access to essential health products is often limited by inefficient allocation systems~\cite{souza2024global}. In Sierra Leone, despite the FHCI providing free medical care to pregnant women and children, maternal mortality rate stands at 717 per 100,000 live births --- one of the highest in the world~\cite{who_maternal_mortality, shafiq2024causes}.

Lack of access to essential medicines is one of the key contributors to preventable maternal deaths~\cite{souza2024global}. In Sierra Leone, the National Medical Supplies Agency (NMSA) (part of the Ministry of Health and Sanitation (MoHS)) was created to manage the procurement and distribution of medicines and medical supplies to public health facilities across the nation, including over 70 essential medicines for women and children. Supply of these medicines largely relies on donations from international organizations. Prior to our collaboration, the NMSA distributed these medicines across the country following a centralized two-level push system~\cite{terwiesch2006matching}, where the NMSA first allocated supply to 16 districts, which districts then allocate to individual health facilities in their catchment (see details in Supplement \S\ref{sup:ExistingAllocation}). The two-level system is for administrative purposes only; all warehouses and health facilities are owned and operated by the NMSA. Allocation decisions were largely made via an Excel tool; however, it was unable to capture key demand patterns such as seasonality and other time-varying fluctuations. As a result, district pharmacists often made significant manual adjustments because they perceived that the tool’s estimates did not accurately reflect the actual needs of health facilities. According to NMSA officials, approximately 42\% of facility requests were unfulfilled on average.

In practice, this shortage occurred even when the total supply was adequate---some health facilities received a surplus of medicines while others suffered shortages (surpluses often did not carry over fully to the next quarter due to significant reported waste and expiration). For instance, DHIS2 data showed significant heterogeneity in both stockouts and wastage---e.g., for the Tonkolili district, in 2022 Q1, 10\% of facilities that had previous stockouts continued to face supply shortages, while 22\% of facilities with available stock had excess supply that could have been redistributed. These findings suggest more efficient allocation strategies that better match supply and demand as a promising path to significantly reducing shortages.

\begin{figure}[t]
\begin{center}
\includegraphics[width=0.95\linewidth]{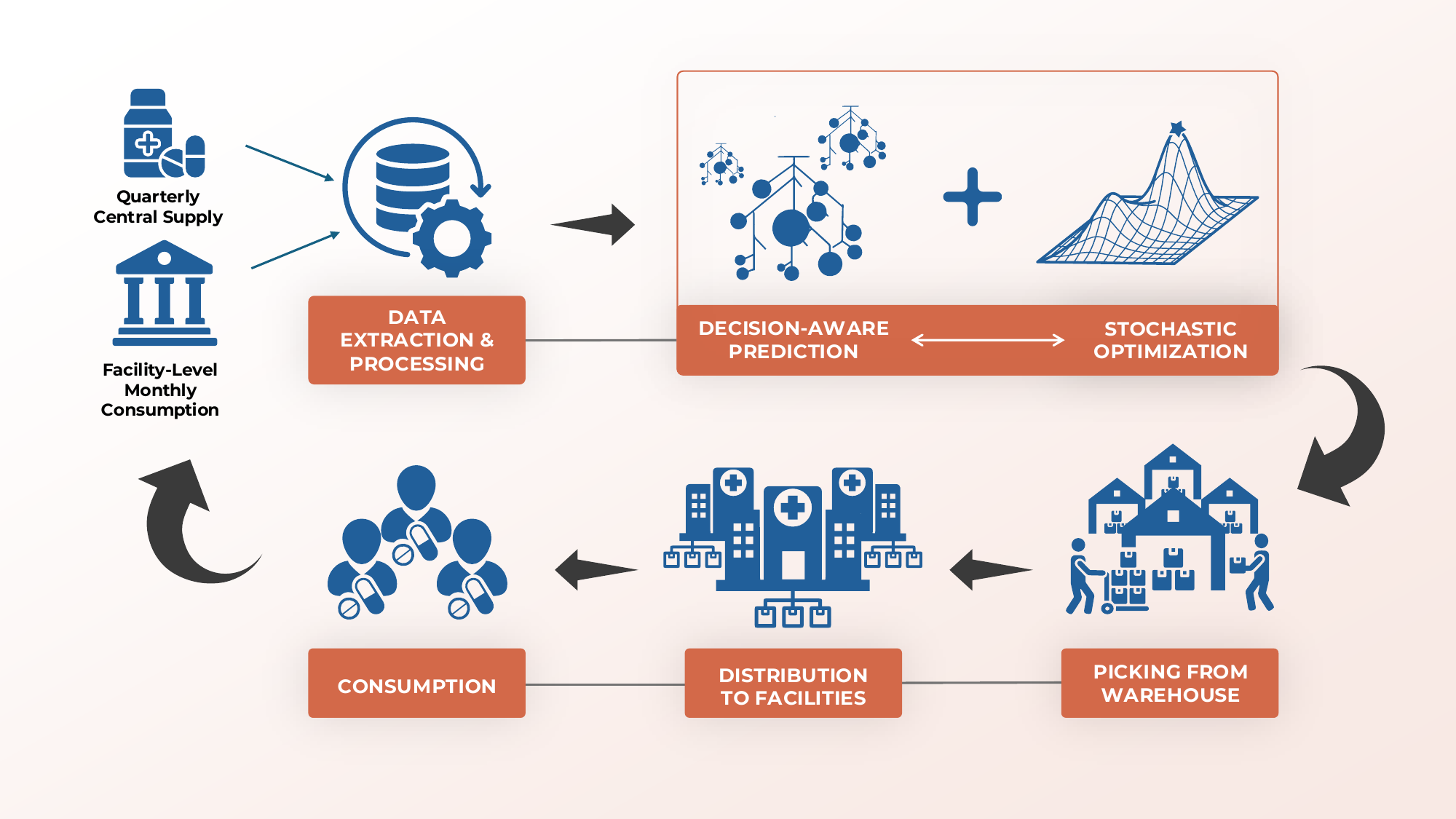}
\caption{\textbf{System Overview.} Every quarter, our system extracts and processes data including the total available central stock and historical facility-level consumption records from mSupply and the DHIS2 government database. It then trains a decision-aware prediction model that informs a stochastic optimization procedure to make allocation decisions. The system further provides a picking list for frontline workers to collect supplies from designated warehouses, enabling efficient distribution to local health facilities. Finally, the resulting patient consumption data is recorded for informing predictions and decisions in subsequent quarters.}
\label{fig:mlframework}
\end{center}
\end{figure}

Fig.~\ref{fig:mlframework} illustrates the system we developed and deployed. Unlike the earlier two-level approach, our system directly generates facility-level allocations, which are provided to district pharmacists to support their decision-making. Our system begins by pulling monthly consumption data from DHIS2, along with quarterly data on total central stock available for allocation and their expiry dates from the mSupply warehouse management system (see Supplement~\S\ref{sup:datasets} for details). Then, it performs significant preprocessing to ensure data reliability and construct informative features for prediction (see Section~\ref{sup:dataprocessing} for details).

Our prediction and optimization algorithm (illustrated in the box in Fig.~\ref{fig:mlframework}) can be divided into two components. First, we predict the demand distribution for each facility-product pair using a novel decision-aware machine learning framework (described in the next section). Second, given the demand forecasts, our optimization algorithm outputs allocation decisions directly from the central stock to individual health facilities, which are designed to minimize the expected shortage of medicines. In particular, for each product, it aims to minimize \emph{unmet demand}---the total number of units of requested product that go unfulfilled---across facilities. To account for the stochastic nature of demand, we use the expected unmet demand according to probabilistic demand forecasts. This optimization problem can be solved efficiently via linear programming using a sample average approximation~\cite{birge2011introduction}. Once the allocation is determined, our system assigns each batch of supplies to a warehouse based on proximity, central stock availability, and product expiration dates (see Section~\ref{sup:warehouse} for details). Finally, new patient consumption data is recorded and used to retrain our model each quarter just before making allocation decisions.

\section{Machine Learning for Demand Forecasting}

Next, we describe our machine learning framework for predicting the demand distribution used in our optimization algorithm. Given a training dataset constructed from historical demand data, we could apply a traditional strategy for time series forecasting such as Autoregressive Integrated Moving Average (ARIMA)~\cite{gilbert2005arima}. However, these approaches work poorly due to data scarcity---e.g., when we trained our model for 2023 Q2 allocations, each facility-product pair had at most 37 monthly observations (i.e., from January 2020 to January 2023).

Instead, we designed a machine learning framework that leverages three key techniques: (1) multi-task learning to share data across facilities, (2) catalytic priors to regularize the model in data-poor regions to mitigate data inequity, and (3) a novel decision-aware learning algorithm to focus predictive power on facilities that are most relevant to the downstream optimization problem. We summarize our techniques below, and provide details in Section~\ref{sup:mlframework}.

We developed a multi-task learning strategy that trains one demand prediction model for each of our two product types (medicines and medical supplies/equipment) across all facility-product pairs within that group. Our multi-task learning strategy enables knowledge transfer from locations with more available data to ones with less available data~\cite{caruana1997multitask,bastani2021predicting,xu2025multitask}. Specifically, our system constructs features that facilitate generalization across facility-product pairs (e.g., average demand in the past year), as well as features that capture trends specific to a given facility-product pair (e.g., the facility type and location, product fixed effect). Then, it trains a random forest to predict demand from these features (we use a random forest since it consistently outperforms other methods; see Supplement~\S\ref{sup:outofsample}).

While multi-task learning can improve performance in data-poor locations, there are still systematic differences between data-poor and data-rich locations (e.g., missing data often arises disproportionately in poorer regions due to staffing shortages). Such \emph{missing-not-at-random} data~\cite{rubin1976inference} can lead to covariate shift, which reduces prediction accuracy for data-poor locations. To mitigate this data inequity, our system leverages catalytic priors~\cite{huang2020catalytic} to regularize predictions for data-poor locations towards a simpler, less biased population-based model. Specifically, it first uses census and satellite data to estimate catchment population, and then estimates demand proportionally to the catchment population. This strategy ensures complete coverage of all facilities without biases due to low-quality data. Our system integrates this simple model as a catalytic prior for our machine learning model, regularizing predictions for data-poor locations towards the population-based ones. 

Finally, our system leverages decision-aware learning~\cite{wilder2019melding,bertsimas2020predictive,kallus2022stochastic,shahdecision} to focus predictive power on instances most relevant to optimizing allocation. Intuitively, not all predictions matter equally---e.g., since our goal is to prevent unmet demand, we are primarily interested in how much we should stock facilities that are \textit{likely} to be insufficiently stocked for a particular product. Whereas a standard machine learning approach would treat all facility-product pairs equally when training a prediction model, our decision-aware learning algorithm focuses attention on facility-product pairs deemed more important to the decision-making objective; in this case, it upweights observations corresponding to facilities that are likely to be under-stocked. We found that existing decision-aware learning algorithms were either computationally intractable at our scale or incompatible with the rest of our prediction and optimization pipeline; thus, we developed a novel decision-aware learning algorithm that could be integrated more easily. Prior to deploying our framework, we validated it on historical data, showing that it outperformed existing approaches on a held-out test set; see Supplement~\S\ref{sup:outofsample}.

\section{Deployment}

In collaboration with the NMSA,  in May 2023, we deployed our system in 5 of the 16 districts in Sierra Leone to perform facility-level allocations for the second quarter (June through August). These districts---Tonkolili, Falaba, Karene, Kono, and Pujehun (see map in Fig.~\ref{fig:treatmentmap})---were selected by the central government based on a randomized allocation schedule. Prior to our deployment, the government had already established both the overall supply as well as the total amounts to be allocated to control districts. The remaining supply was then to be allocated to the treatment districts, maintaining independence of supply quantities between the two groups. Additional deployment details are in Supplement~\S\ref{sup:deployment}.

Beyond the performance of the allocation system, a critical priority was ensuring that it could be seamlessly integrated into NMSA's existing workflows. To this end, we conducted two targeted training sessions for policymakers and frontline workers, providing the necessary technical knowledge for operating our tool and understanding its implications. Implementation emphasized stakeholder engagement at every level---outputs were presented in a familiar format that matched pre-existing workflows and reviewed by NMSA management as well as district pharmaceutical managers prior to finalization.

Furthermore, our system functions as a decision support tool, where central and district-level planners retain ultimate authority and can override the system when necessary. We found that decision-makers closely followed the algorithmic allocations---the normalized overlap between the actual and algorithmic allocations ranged from 0.89 to 1 (see Supplement \S\ref{sup:compliance} for details). Given the high compliance rates, stakeholder buy-in, and early indications of improved efficiency (detailed in the following section), the national government adopted our system and expanded its use to all public-sector health facilities in Sierra Leone beginning in the third quarter of 2023.

\begin{figure}[t]
\centering
\includegraphics[width=.5\columnwidth]{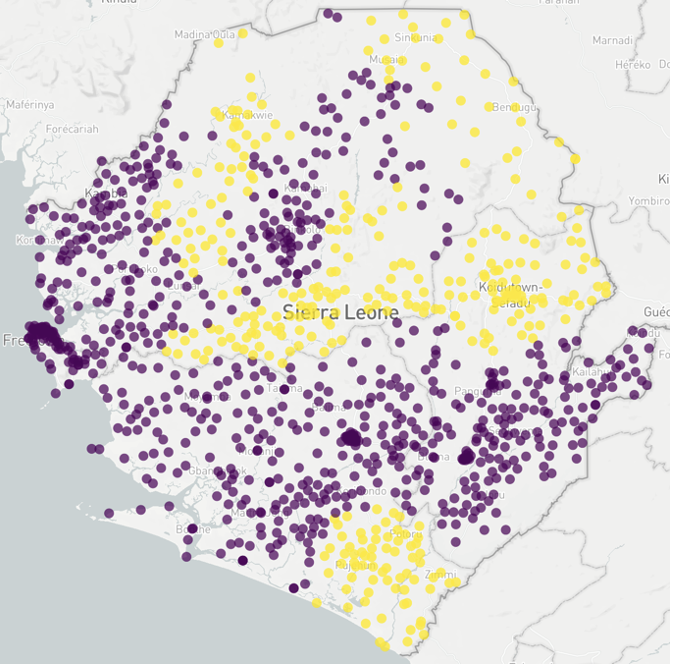}
\caption{\textbf{Map of treatment distribution in 2023 Q2.} Yellow dots denote treated facilities (i.e., those in the Tonkolili, Falaba, Karene, Kono, and Pujehun districts), and purple dots denote control facilities (i.e., those in the Kailahun, Kenema, Bombali, Koinadugu, Kambia, Port Loko, Bo, Bonthe, Moyamba, Western Area Rural, and Western Area Urban districts).}
\label{fig:treatmentmap}
\end{figure}

\section{Evaluation}

We evaluated the effectiveness of our system at improving allocation efficiency. While our optimization objective aims to minimize unmet demand, we do not directly observe this quantity (we only observe when patients receive medicine, not when they are turned away due to stockouts). Instead, examined the equivalent objective of maximizing \textit{patient consumption}, which is directly observed---since patient demand (which is fixed but unobserved) equals consumption plus unmet demand, maximizing total consumption is equivalent to minimizing total unmet demand. (This equivalence holds in a very general setting under the assumption that facilities do not wastefully give away medicines to patients that do not need them, which is unlikely in a such a resource-constrained environment; we provide theoretical and empirical support for this equivalence in Supplement~\S\ref{sup:consumption}.) We overview our analyses here, and provide details in Section~\ref{sup:mainanalysis} and Supplement~\S\ref{sup:evaldeploy}; our results are summarized in Fig.~\ref{fig:combined} and Table~\ref{tab:att_tabular}.

Our main analysis focuses on the second quarter of 2023 (2023 Q2), where our system was deployed in a randomly selected subset of districts, providing natural variation enabling us to estimate causal effects. Specifically, the NMSA performed allocation using standard procedures for the 11 control districts, after which our system was used to distribute remaining stock across the 5 treated districts. We show descriptive time series trends of average normalized consumption for the treatment and control groups in Fig.~\ref{fig:combined} (left). We estimated how much our system changed patient consumption levels in treated districts (i.e., districts where our system was deployed) compared to what would have happened without our intervention, known as the Average Treatment Effect on the Treated (ATT). We used a balanced panel dataset of 312 facilities in treated districts and 746 facilities in control districts using time series data beginning in 2022 Q3 through 2023 Q3, after which our tool was used nationwide. (Prior to
the implementation of the data-driven Excel allocation tool by Crown Agents in 2022 Q3, allocation procedures were highly inconsistent, rendering the data unreliable.)

Given the limited number of treated districts, we used a Synthetic Difference-in-Differences (SynthDiD) analysis~\cite{arkhangelsky2021synthetic} to analyze the impact of our deployment. SynthDiD combines the strengths of Difference-in-Differences~\cite{card1993minimum} (which assumes similar trends between treated and untreated groups in the pre-treated period) and Synthetic Controls~\cite{abadie2010synthetic}
(which constructs a synthetic control group that is similar to the treated units in terms of observed characteristics and outcome trends in the pre-treated period). SynthDiD ensures that differences between facilities in treated districts and synthetic control facilities remain stable prior to treatment.

Fig.~\ref{fig:combined} (right) shows the corresponding time series trends for the treatment and synthetic control groups, and results are shown in the ``SynthDiD'' row of Table~\ref{tab:att_tabular}. The 5 treated districts experienced a statistically significant increase of 19\% ($p<0.01$) in consumption; the improvement percentage is calculated from SynthDiD counterfactual estimates as:
\begin{align*}
\frac{\text{Average treatment outcome} - \text{Average counterfactual outcome}}{\text{Average counterfactual outcome}}.
\end{align*}
We validated our SynthDiD approach using a standard event-study analysis~\cite{clarke2023synthetic}, which showed that there are no statistically significant differences between treated and synthetic control facilities prior to our intervention, and that the change in consumption emerged only after our system was deployed. These results suggest that our system substantially improved access to essential medicines in treated districts. We provide additional details on our main analysis in Section~\ref{sup:mainanalysis}.

\begin{figure}[t]
\centering
\begin{subfigure}[t]{0.49\linewidth}
\centering
\includegraphics[width=\linewidth]{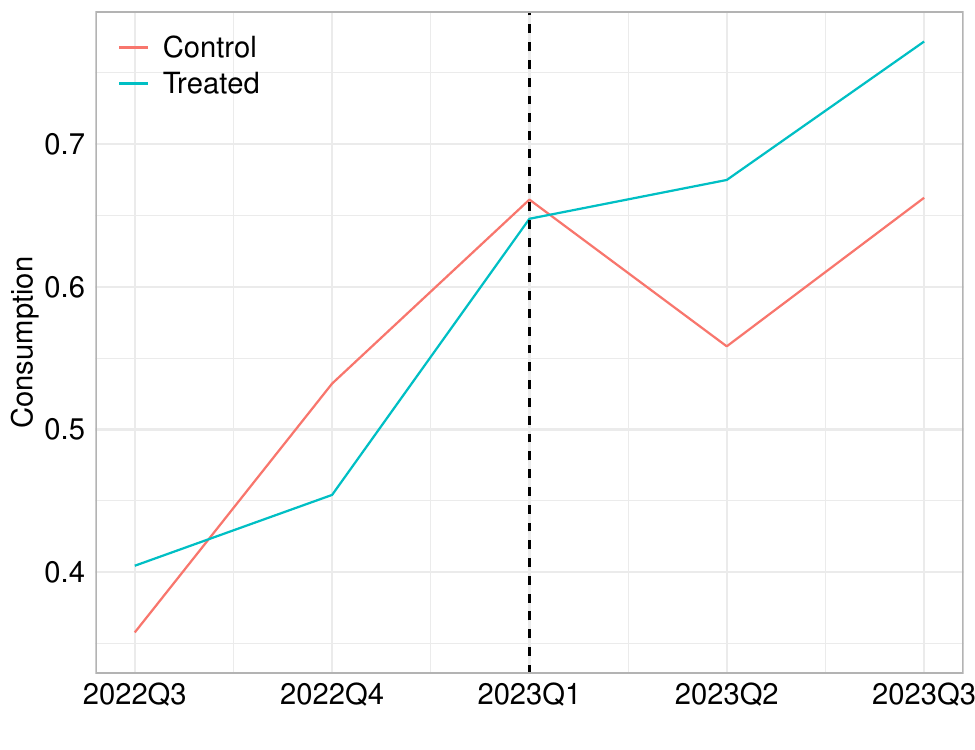}
\caption{\textbf{Treatment vs Control Groups.}}
\label{fig:raw}
\end{subfigure}
\hfill
\begin{subfigure}[t]{0.49\linewidth}
\centering
\includegraphics[width=\linewidth]{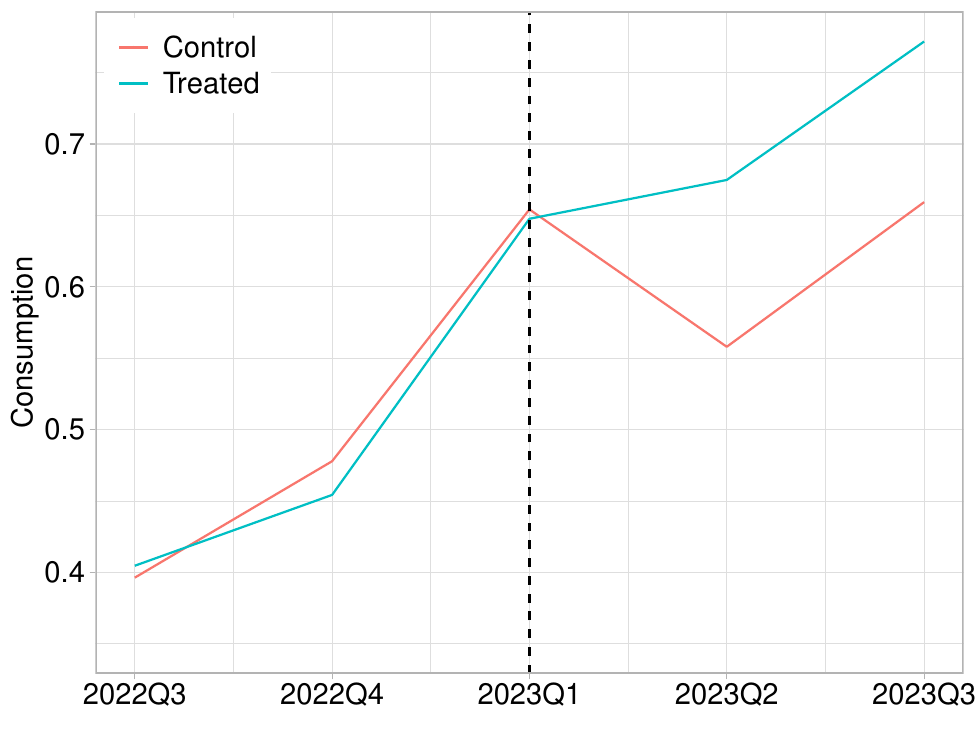}
\caption{\textbf{Treatment vs Synthetic Control Groups.}}
\label{fig:Main}
\end{subfigure}
\caption{\textbf{Average normalized consumption time trends.} Panel (a) compares treatment (green) versus control (orange) groups using raw data. Panel (b) compares treatment (green) versus synthetic control (orange) groups constructed via SynthDiD. The $x$-axis shows time in quarters and the $y$-axis denotes average normalized consumption. The vertical dashed line at 2023 Q1 marks the quarter immediately before deployment.}
\label{fig:combined}
\end{figure}

Next, we studied how these improvements were distributed; we summarize our analyses here and provide details in Supplement~\S\ref{sup:heteroanalysis}. First, the NMSA seeks to prioritize larger facilities such as hospitals, since these facilities are most relied upon to provide quality care; indeed, we found that our system confers systematically greater benefits (36\%, $p<0.05$) for larger facilities, while preserving consumption for smaller ones. In contrast, smaller health facilities showed a smaller, statistically insignificant improvement. They may face more fundamental challenges (e.g., inadequate staffing or equipment) that limit their ability to provide services or utilize medical supplies regardless of availability---for instance, during a field visit, we found a small health post closed for two consecutive days, and local residents shared that they often seek essential care at larger, more reliable facilities.

Next, we sought to ensure our consumption gains are not concentrated among wealthier patients, but also benefit poorer patients with the greatest need. To this end, we examined previously under-served facilities (i.e., facilities that experienced at least one stockout in our data prior to our deployment). We found that these facilities also saw a significant increase in consumption (32\%, $p<0.01$), suggesting that our system successfully addressed potential biases from uneven data quality and availability, leading to more equitable resource allocation. Similarly, we found that rural facilities (which tend to serve poorer populations) benefited from our tool without compromising urban ones.

Finally, we performed several robustness checks to validate our main results. The main checks are described below and summarized in Table~\ref{tab:att_tabular}; Supplement~\S\ref{sup:robustnesschecks} provides details as well as additional checks.  First, to ensure consistency under a simpler methodology, we performed a standard DiD analysis, finding a consistent 21\% increase in consumption ($p<0.01$) (``DiD'' row in Table~\ref{tab:att_tabular}). Second, to better account for geographic factors, we performed a DiD analysis on geographically matched facility pairs (i.e., facilities within a short distance of a district border, which have a neighboring facility with the opposite treatment status), again yielding a consistent 21\% increase in consumption ($p<0.05$) (``Matching'' row).

Third, we considered an alternative analysis where we compared consumption trends across products instead of facilities; this robustness check uses a substantially different design with additional data to build confidence in our results. In particular, we considered 25 other products that were concurrently allocated using a different, pre-existing mechanism.
The consumption levels for these products can be used as a control group throughout our study period across \textit{all} districts in the country using a staggered treatment---i.e., an advantage of this analysis is that it can be performed not just for the partial deployment in 2023 Q2, but also for the nationwide implementation starting in Q3. We found a consistent, statistically significant increase in consumption of 18\% ($p<0.001$) (``Alt. Control'' row).

Next, a potential concern with our evaluation is due to missing data from facilities that failed to record consumption in a particular month, which may bias our results. Our main analysis dropped observations with missing outcomes, but we also performed robustness checks using multiple standard imputation strategies---low-rank matrix completion~\cite{candes2012exact}, population-based methods, and historical average consumption. The resulting SynthDiD ATT estimates are all statistically significant and consistent with our main analysis (``Imputation'' rows). We also performed additional tests suggesting that our results are not driven by differential missingness; see Supplement~\S\ref{ssec:missingness} for details.

\begin{table}[t]
\centering
\caption{\textbf{Average Treatment Effect on the Treated (ATT).} ATT estimates across different estimation strategies. SynthDiD, our main specification, shows a 19\% increase in consumption, with similar magnitudes under DiD and several imputation approaches. Estimation using alternative control product data or geographic facility-level matching across district borders also produce consistent positive effects. For stockouts as the outcome, the point estimate is negative but statistically insignificant; this is expected since it is not our objective.}
\label{tab:att_tabular}
\begin{threeparttable}
\begin{tabular}{l c c c r}
\toprule
\textbf{Model} & \textbf{Coef.} & \textbf{Std. Error} & \textbf{Obs.} & \textbf{ Improv.\%} \\
\midrule
\multicolumn{5}{l}{\textit{Dependent Variable = Consumption}} \\
\midrule
SynthDiD & 0.116$^{*}$ & (0.046) & 5,290 & 19\% \\
DiD & 0.128$^{**}$ & (0.046) & 5,290 & 21\% \\
Matching (15km) & 0.121$^{**}$ & (0.054) & 2,415 & 21\% \\
Alt. Control & 0.095$^{***}$ & (0.022) & 10,520 & 18\% \\
Imputation (low rank) & 0.037$^{**}$ & (0.014) & 5,455 & 15\% \\
Imputation (avg cons) & 0.067$^{**}$ & (0.031) & 5,455 & 21\% \\
Imputation (pop) & 0.076$^{**}$ & (0.028) & 5,455 & 27\% \\
\midrule
\multicolumn{5}{l}{\textit{Dependent Variable = Stockout}} \\
\midrule
SynthDiD & $-$0.280 & (0.179) & 5,290 & $-$4.6\% (Insig) \\
\bottomrule
\end{tabular}
\begin{tablenotes}[flushleft]
\scriptsize 
\item \textit{Notes:} $^{*} p < 0.05$, $^{**} p < 0.01$, $^{***} p < 0.001$. Standard errors in parentheses. Improvement \% is relative to counterfactual mean.
\end{tablenotes}
\end{threeparttable}
\end{table}

Finally, we examined the impact of our system on stockouts. While reducing stockouts might seem like a natural objective, optimizing for fewer stockouts can actually produce highly undesirable allocations---e.g., an optimal strategy is to allocate zero supply to a small number of high-volume facilities, thereby ensuring that the remaining facilities are well-stocked. We found a directional reduction in stockouts but it is not statistically significant ($p\approx0.12$) (``Stockouts'' rows)---i.e., our system does not inadvertently increase stockouts.

\section{Discussion}

Our findings provide strong field evidence of the effectiveness of our novel machine learning framework for resource allocation, significantly and equitably improving access to essential medicines in a highly constrained environment. By replacing manual effort with data-driven decision making, our system streamlined supply chain management, reduced administrative burdens, and adapted to real-time changing consumption patterns, enabling facilities to better align limited supply with patient demand. Efficient allocation becomes increasingly important as aid becomes more constrained to ensure the limited available supply meets demand.

To ensure sustainable impact, we developed a web application with an intuitive user interface, which is now owned by the Sierra Leone government. This system integrates with their government databases to automate the entire process, from data extraction and processing to generating final allocation results. Notably, it is highly cost-effective, operating without any additional workforce requirements and costing only \$30 per month nationally in server fees.

Beyond Sierra Leone, we believe our framework can be easily adapted to other contexts---by design, it relies on commonly available data: total available central stock, monthly health facility consumption, and stock expiry dates. This data often already exists in a unified, digitized format---e.g., 53 (out of 54 total) African countries use DHIS2~\cite{dhis2} and 16 use mSupply~\cite{mSupply}. The key tasks required would be adapting our system with local data collection systems and discussing our optimization objective with policy makers to incorporate country-specific constraints.

\bibliography{scibib}

@article{johnson2021inventory,
  title={Inventory management practices and supply chain performance of antiretroviral medicines in public hospitals in Nyamira County, Kenya},
  author={Johnson, Anyona and Peter, Karimi and Shital, Maru},
  journal={Rwanda Journal of Medicine and Health Sciences},
  volume={4},
  number={2},
  pages={257--268},
  year={2021}
}

@article{gallien2021inventory,
  title={Inventory policies for pharmaceutical distribution in Zambia: Improving availability and access equity},
  author={Gallien, J{\'e}r{\'e}mie and Leung, Ngai-Hang Z and Yadav, Prashant},
  journal={Production and Operations Management},
  volume={30},
  number={12},
  pages={4501--4521},
  year={2021},
  publisher={Sage Publications Sage CA: Los Angeles, CA}
}

@article{mwashiuya2025effect,
  title={Effect of Inventory Control on Medical Supply Chain Performance: The Moderating Role of Information Technology among Medical Stores Department Customers in Tanzania},
  author={Mwashiuya, Stephano and Mchopa, Alban D and Shayo, France A},
  journal={African Journal of Procurement, Logistics \& Supply Chain Management},
  volume={8},
  number={3},
  pages={01--14},
  year={2025}
}

@article{fenta2017human,
  title={Human resources for public health supply chain management in Ethiopia: Competency mapping and training needs},
  author={Fenta, Teferi Gedif},
  journal={Ethiopian Journal of Health Development},
  volume={31},
  number={4},
  pages={266--275},
  year={2017}
}

@book{terwiesch2006matching,
  title={Matching supply with demand: an introduction to operations management},
  author={Terwiesch, Christian and Cachon, Gerard},
  year={2006},
  publisher={McGraw-Hill}
}

@article{souza2024global,
  title={A global analysis of the determinants of maternal health and transitions in maternal mortality},
  author={Souza, Jo{\~a}o Paulo and Day, Louise Tina and Rezende-Gomes, Ana Clara and Zhang, Jun and Mori, Rintaro and Baguiya, Adama and Jayaratne, Kapila and Osoti, Alfred and Vogel, Joshua P and Campbell, Oona and others},
  journal={The Lancet Global Health},
  volume={12},
  number={2},
  pages={e306--e316},
  year={2024},
  publisher={Elsevier}
}

@article{besbes2013implications,
  title={On implications of demand censoring in the newsvendor problem},
  author={Besbes, Omar and Muharremoglu, Alp},
  journal={Management Science},
  volume={59},
  number={6},
  pages={1407--1424},
  year={2013},
  publisher={INFORMS}
}

@article{donnelly2011did,
  title={How did Sierra Leone provide free health care?},
  author={Donnelly, John},
  journal={The Lancet},
  volume={377},
  number={9775},
  pages={1393--1396},
  year={2011},
  publisher={Elsevier}
}

@article{greene2000econometric,
  title={Econometric analysis 4th edition},
  author={Greene, William H},
  journal={International edition, New Jersey: Prentice Hall},
  pages={201--215},
  year={2000}
}

@article{gilbert2005arima,
  title={An ARIMA supply chain model},
  author={Gilbert, Kenneth},
  journal={Management Science},
  volume={51},
  number={2},
  pages={305--310},
  year={2005},
  publisher={INFORMS}
}

@misc{who_maternal_mortality, author = {{World Health Organization}}, title = {Maternal Mortality: Fact Sheet}, year = 2024, url = {https://www.who.int/news-room/fact-sheets/detail/maternal-mortality}}

@article{shafiq2024causes,
  title={Causes of maternal deaths in Sierra Leone from 2016 to 2019: analysis of districts’ maternal death surveillance and response data},
  author={Shafiq, Yasir and Caviglia, Marta and Bah, Zainab Juheh and Tognon, Francesca and Orsi, Michele and Kamara, Abibatu K and Claudia, Caracciolo and Moses, Francis and Manenti, Fabio and Barone-Adesi, Francesco and others},
  journal={BMJ open},
  volume={14},
  number={1},
  pages={e076256},
  year={2024},
  publisher={British Medical Journal Publishing Group}
}

@book{birge2011introduction,
  title={Introduction to stochastic programming},
  author={Birge, John R and Louveaux, Francois},
  year={2011},
  publisher={Springer Science \& Business Media}
}

@article{esteva2019guide,
  title={A guide to deep learning in healthcare},
  author={Esteva, Andre and Robicquet, Alexandre and Ramsundar, Bharath and Kuleshov, Volodymyr and DePristo, Mark and Chou, Katherine and Cui, Claire and Corrado, Greg and Thrun, Sebastian and Dean, Jeff},
  journal={Nature medicine},
  volume={25},
  number={1},
  pages={24--29},
  year={2019},
  publisher={Nature Publishing Group US New York}
}

@article{ngiam2019big,
  title={Big data and machine learning algorithms for health-care delivery},
  author={Ngiam, Kee Yuan and Khor, Wei},
  journal={The Lancet Oncology},
  volume={20},
  number={5},
  pages={e262--e273},
  year={2019},
  publisher={Elsevier}
}

@article{de2018clinically,
  title={Clinically applicable deep learning for diagnosis and referral in retinal disease},
  author={De Fauw, Jeffrey and Ledsam, Joseph R and Romera-Paredes, Bernardino and Nikolov, Stanislav and Tomasev, Nenad and Blackwell, Sam and Askham, Harry and Glorot, Xavier and O’Donoghue, Brendan and Visentin, Daniel and others},
  journal={Nature medicine},
  volume={24},
  number={9},
  pages={1342--1350},
  year={2018},
  publisher={Nature Publishing Group}
}

@article{yang2022large,
  title={A large language model for electronic health records},
  author={Yang, Xi and Chen, Aokun and PourNejatian, Nima and Shin, Hoo Chang and Smith, Kaleb E and Parisien, Christopher and Compas, Colin and Martin, Cheryl and Costa, Anthony B and Flores, Mona G and others},
  journal={NPJ digital medicine},
  volume={5},
  number={1},
  pages={194},
  year={2022},
  publisher={Nature Publishing Group UK London}
}

@misc{SierraLeoneMoHS_CIESIN_2023,
  author       = {{Sierra Leone Ministry of Health and Sanitation (MoHS)} and {Center for International Earth Science Information Network (CIESIN), Columbia University}},
  title        = {Sierra Leone National Health Facilities Dataset Version 01},
  year         = {2023},
  publisher    = {Geo-Referenced Infrastructure and Demographic Data for Development (GRID3)},
  address      = {Palisades, NY},
  url          = {https://doi.org/10.7916/ag19-s334},
}

@article{weiss2020global,
  title={Global maps of travel time to healthcare facilities},
  author={Weiss, DJ and Nelson, Andy and Vargas-Ruiz, CA and Gligori{\'c}, K and Bavadekar, S and Gabrilovich, Evgeniy and Bertozzi-Villa, A and Rozier, J and Gibson, HS and Shekel, T and others},
  journal={Nature medicine},
  volume={26},
  number={12},
  pages={1835--1838},
  year={2020},
  publisher={Nature Publishing Group US New York}
}

@article{martin2020implementing,
  title={Implementing nationwide facility-based electronic disease surveillance in Sierra Leone: lessons learned},
  author={Martin, Daniel W and Sloan, Michelle L and Gleason, Brigette L and de Wit, Les and Vandi, Mohamed Alex and Kargbo, David K and Clemens, Nelson and Kamara, Ansumana and Njuguna, Charles and Sesay, Stephen and others},
  journal={Health security},
  volume={18},
  number={S1},
  pages={S--72},
  year={2020},
  publisher={Mary Ann Liebert, Inc., publishers 140 Huguenot Street, 3rd Floor New~…}
}

@article{merkuryeva2019demand,
  title={Demand forecasting in pharmaceutical supply chains: A case study},
  author={Merkuryeva, Galina and Valberga, Aija and Smirnov, Alexander},
  journal={Procedia Computer Science},
  volume={149},
  pages={3--10},
  year={2019},
  publisher={Elsevier}
}

@book{unicef2015state,
  title={The State of the World's Children 2015: Reimagine the Future-Innovation for Every Child (Executive Summary)},
  author={UNICEF and others},
  year={2015},
  publisher={UN}
}

@article{zhu2021demand,
  title={Demand forecasting with supply-chain information and machine learning: Evidence in the pharmaceutical industry},
  author={Zhu, Xiaodan and Ninh, Anh and Zhao, Hui and Liu, Zhenming},
  journal={Production and Operations Management},
  volume={30},
  number={9},
  pages={3231--3252},
  year={2021},
  publisher={SAGE Publications Sage CA: Los Angeles, CA}
}

@article{yadav2015health,
  title={Health product supply chains in developing countries: diagnosis of the root causes of underperformance and an agenda for reform},
  author={Yadav, Prashant},
  journal={Health systems \& reform},
  volume={1},
  number={2},
  pages={142--154},
  year={2015},
  publisher={Taylor \& Francis}
}

@article{zuma2019challenges,
  title={Challenges associated with provision of essential medicines in the Republic of South Africa and other selected African countries},
  author={Zuma, Sibusiso Memory and Modiba, Lebitsi Maud},
  journal={World J Pharm Res},
  volume={8},
  pages={1532--1547},
  year={2019}
}

@article{fredrick2016factors,
  title={Factors influencing frequent stock-outs of essential medicines in public health facilities in Kisii County, Kenya},
  author={Fredrick, Magak Wanga and Muturi, Willy},
  journal={IOSR J Business Manag},
  volume={18},
  number={10},
  pages={63--75},
  year={2016}
}

@article{yenet2023challenges,
  title={Challenges to the availability and affordability of essential medicines in African countries: a scoping review},
  author={Yenet, Aderaw and Nibret, Getinet and Tegegne, Bantayehu Addis},
  journal={ClinicoEconomics and Outcomes Research},
  pages={443--458},
  year={2023},
  publisher={Taylor \& Francis}
}

@misc{WorldPopData,
  author = {{WorldPop}},
  title = {WorldPop Global Project Population Data: Estimated Residential Population per 100x100m Grid Square},
  year = {2021},
  howpublished = {\url{https://www.worldpop.org}},
  publisher = {WorldPop, University of Southampton},
}

@misc{mSupply,
  author = {mSupply},
  year={2024},
  note = {https://www.msupply.org.nz},
}

@article{arkhangelsky2021synthetic,
  title={Synthetic difference-in-differences},
  author={Arkhangelsky, Dmitry and Athey, Susan and Hirshberg, David A and Imbens, Guido W and Wager, Stefan},
  journal={American Economic Review},
  volume={111},
  number={12},
  pages={4088--4118},
  year={2021},
  publisher={American Economic Association 2014 Broadway, Suite 305, Nashville, TN 37203}
}

@misc{clarke2023synthetic,
      title={Synthetic Difference In Differences Estimation}, 
      author={Damian Clarke and Daniel Pailañir and Susan Athey and Guido Imbens},
      year={2023},
      eprint={2301.11859},
      archivePrefix={arXiv},
      primaryClass={econ.EM}
}

@article{abadie2010synthetic,
  title={Synthetic control methods for comparative case studies: Estimating the effect of California’s tobacco control program},
  author={Abadie, Alberto and Diamond, Alexis and Hainmueller, Jens},
  journal={Journal of the American statistical Association},
  volume={105},
  number={490},
  pages={493--505},
  year={2010},
  publisher={Taylor \& Francis}
}

@article{mbonyinshuti2021prediction,
  title={The prediction of essential medicines demand: a machine learning approach using consumption data in Rwanda},
  author={Mbonyinshuti, Francois and Nkurunziza, Joseph and Niyobuhungiro, Japhet and Kayitare, Egide},
  journal={Processes},
  volume={10},
  number={1},
  pages={26},
  year={2021},
  publisher={MDPI}
}

@manual{Kenya2016FamilyPlanning,
  title = {National Guidelines for Quantification, Procurement, and Pipeline Monitoring for Family Planning Commodities in Kenya},
  author = {{Reproductive and Maternal Health Services Unit, Ministry of Health}},
  year = {2016},
  month = {Sep},
  address = {Nairobi, Kenya},
  organization = {Ministry of Health}
}

@manual{USAID2014HealthCommodities,
  title = {Quantification of Health Commodities: A Guide to Forecasting and Supply Planning for Procurement},
  author = {{USAID | DELIVER PROJECT, Task Order 4}},
  year = {2014},
  address = {Arlington, Va},
  publisher = {USAID | DELIVER PROJECT, Task Order 4}
}

@techreport{whosara2018,
    author    = {{World Health Organization}},
    title     = {Service Availability and Readiness Assessment ({SARA}): An Annual Monitoring System for Service Delivery},
    year      = {2018},
    institution = {World Health Organization}
}

@article{bastani2021efficient,
  title={Efficient and targeted COVID-19 border testing via reinforcement learning},
  author={Bastani, Hamsa and Drakopoulos, Kimon and Gupta, Vishal and Vlachogiannis, Ioannis and Hadjichristodoulou, Christos and Lagiou, Pagona and Magiorkinis, Gkikas and Paraskevis, Dimitrios and Tsiodras, Sotirios},
  journal={Nature},
  volume={599},
  number={7883},
  pages={108--113},
  year={2021},
  publisher={Nature Publishing Group UK London}
}

@article{tetteh2009creating,
  title={Creating reliable pharmaceutical distribution networks and supply chains in African countries: Implications for access to medicines},
  author={Tetteh, Ebenezer},
  journal={Research in Social and Administrative Pharmacy},
  volume={5},
  number={3},
  pages={286--297},
  year={2009},
  publisher={Elsevier}
}

@article{linnander2017process,
  title={Process evaluation of knowledge transfer across industries: leveraging Coca-Cola’s supply chain expertise for medicine availability in Tanzania},
  author={Linnander, Erika and Yuan, Christina T and Ahmed, Shirin and Cherlin, Emily and Talbert-Slagle, Kristina and Curry, Leslie A},
  journal={PloS one},
  volume={12},
  number={11},
  pages={e0186832},
  year={2017},
  publisher={Public Library of Science San Francisco, CA USA}
}

@techreport{ICER2019ThresholdRanges,
  author       = {ICER},
  title        = {Perspectives on Cost-Effectiveness Threshold Ranges},
  institution  = {Institute for Clinical and Economic Review},
  year         = {2019},
  type         = {White Paper / Seminar Report},
  url          = {https://icer.org/wp-content/uploads/2023/08/ICER_2019_Perspectives-on-Cost-Effectiveness-Threshold-Ranges.pdf},
}

@misc{HERC_CEA,
  author       = {Health Economics Resource Center (HERC), U.S. Department of Veterans Affairs},
  title        = {Cost‐Effectiveness Analysis},
  howpublished = {\url{https://www.herc.research.va.gov/include/page.asp?id=cost-effectiveness-analysis}}
}

@misc{simplemaps_worldcities_2025,
  title        = {World Cities Database},
  author       = {{SimpleMaps}},
  howpublished = {\url{https://simplemaps.com/data/world-cities}},
  note         = {Last updated May 11, 2025. License: Creative Commons Attribution 4.0.},
  year         = {2025},
  url          = {https://simplemaps.com/data/world-cities}
}

@article{neumann2014updating,
  title={Updating cost-effectiveness—the curious resilience of the $50,000-per-QALY threshold},
  author={Neumann, Peter J and Cohen, Joshua T and Weinstein, Milton C and others},
  journal={N Engl J Med},
  volume={371},
  number={9},
  pages={796--797},
  year={2014}
}

@techreport{projectlastmile2023,
  title        = {2023 Annual Report},
  author       = {{Project Last Mile}},
  year         = 2023,
  institution  = {Project Last Mile},
  url          = {https://www.projectlastmile.com/wp-content/uploads/2024/06/2023-PLM-Annual-Report-240605s.pdf}
}

@article{linnander2018mixed,
  title={A mixed methods evaluation of a multi-country, cross-sectoral knowledge transfer partnership to improve health systems across Africa},
  author={Linnander, Erika and LaMonaca, Katherine and Brault, Marie A and Vyavahare, Medha and Curry, L},
  journal={International Journal of Multiple Research Approaches},
  volume={10},
  number={1},
  pages={136--148},
  year={2018}
}

@article{candes2012exact,
  title={Exact matrix completion via convex optimization},
  author={Candes, Emmanuel and Recht, Benjamin},
  journal={Communications of the ACM},
  volume={55},
  number={6},
  pages={111--119},
  year={2012},
  publisher={ACM New York, NY, USA}
}

@article{caruana1997multitask,
  title={Multitask learning},
  author={Caruana, Rich},
  journal={Machine learning},
  volume={28},
  pages={41--75},
  year={1997},
  publisher={Springer}
}

@article{bertsimas2006robust,
  title={A robust optimization approach to inventory theory},
  author={Bertsimas, Dimitris and Thiele, Aur{\'e}lie},
  journal={Operations research},
  volume={54},
  number={1},
  pages={150--168},
  year={2006},
  publisher={INFORMS}
}

@incollection{bastani2024optimizing,
  title={Optimizing Health Supply Chains in LMICs with Machine Learning: A Case Study in Sierra Leone},
  author={Bastani, Hamsa and Bastani, Osbert and Chung, Tsai-Hsuan},
  booktitle={Responsible and Sustainable Operations: The New Frontier},
  pages={187--202},
  year={2024},
  publisher={Springer}
}

@article{aviv2001effect,
  title={The effect of collaborative forecasting on supply chain performance},
  author={Aviv, Yossi},
  journal={Management science},
  volume={47},
  number={10},
  pages={1326--1343},
  year={2001},
  publisher={INFORMS}
}

@misc{didan2015mod13q1,
  author = {Didan, K.},
  title = {{MOD13Q1 MODIS/Terra Vegetation Indices 16-Day L3 Global 250m SIN Grid V006 [Data set]}},
  year = {2015},
  publisher = {NASA EOSDIS Land Processes Distributed Active Archive Center},
  howpublished = {\url{https://doi.org/10.5067/MODIS/MOD13Q1.006}}
}

@article{xu2025multitask,
  title={Multitask learning and bandits via robust statistics},
  author={Xu, Kan and Bastani, Hamsa},
  journal={Management Science},
  year={2025},
  publisher={INFORMS}
}

@misc{ashenfelter1984using,
  title={Using the longitudinal structure of earnings to estimate the effect of training programs},
  author={Ashenfelter, Orley C and Card, David},
  year={1984},
  publisher={National Bureau of Economic Research Cambridge, Mass., USA}
}

@article{Stevens2016GATHER,
  author    = {Stevens, Gretchen A. and Alkema, Leontine and Black, Robert E. and Boerma, Ties J. and Collins, Gary S. and Ezzati, Majid and others},
  title     = {Guidelines for accurate and transparent health estimates reporting: The GATHER statement},
  journal   = {The Lancet},
  year      = {2016},
  volume    = {388},
  number    = {10062},
  pages     = {e19--e23},
  doi       = {10.1016/S0140-6736(16)30388-9},
}

@article{bastani2021predicting,
  title={Predicting with proxies: Transfer learning in high dimension},
  author={Bastani, Hamsa},
  journal={Management Science},
  volume={67},
  number={5},
  pages={2964--2984},
  year={2021},
  publisher={INFORMS}
}

@misc{HDX_HAPI_Population,
  author    = {{Humanitarian Data Exchange (HDX)}},
  title     = {{HDX HAPI Population Dataset}},
  year      = {2023}, 
  url       = {https://data.humdata.org/dataset/hdx-hapi-population}
}

@techreport{SLCensus2015,
  title        = {2015 Population and Housing Census},
  author       = {{Statistics Sierra Leone}},
  year         = {2015},
  institution  = {Statistics Sierra Leone},
  address      = {Freetown, Sierra Leone},
  url          = {https://www.statistics.sl/index.php/census/census-2015.html},
  type         = {Census Report}
}

@article{ban2019big,
  title={The big data newsvendor: Practical insights from machine learning},
  author={Ban, Gah-Yi and Rudin, Cynthia},
  journal={Operations Research},
  volume={67},
  number={1},
  pages={90--108},
  year={2019},
  publisher={INFORMS}
}

@article{kotary2021end,
  title={End-to-end constrained optimization learning: A survey},
  author={Kotary, James and Fioretto, Ferdinando and Van Hentenryck, Pascal and Wilder, Bryan},
  journal={arXiv preprint arXiv:2103.16378},
  year={2021}
}

@article{shahdecision,
  title={Decision-focused learning without decision-making: Learning locally optimized decision losses},
  author={Shah, Sanket and Wang, Kai and Wilder, Bryan and Perrault, Andrew and Tambe, Milind},
  journal={Advances in Neural Information Processing Systems},
  volume={35},
  pages={1320--1332},
  year={2022}
}

@inproceedings{wilder2019melding,
  title={Melding the data-decisions pipeline: Decision-focused learning for combinatorial optimization},
  author={Wilder, Bryan and Dilkina, Bistra and Tambe, Milind},
  booktitle={Proceedings of the AAAI Conference on Artificial Intelligence},
  volume={33},
  pages={1658--1665},
  year={2019}
}

@article{wang2020automatically,
  title={Automatically learning compact quality-aware surrogates for optimization problems},
  author={Wang, Kai and Wilder, Bryan and Perrault, Andrew and Tambe, Milind},
  journal={Advances in Neural Information Processing Systems},
  volume={33},
  pages={9586--9596},
  year={2020}
}

@article{bertsimas2020predictive,
  title={From predictive to prescriptive analytics},
  author={Bertsimas, Dimitris and Kallus, Nathan},
  journal={Management Science},
  volume={66},
  number={3},
  pages={1025--1044},
  year={2020},
  publisher={INFORMS}
}

@article{elmachtoub2022smart,
  title={Smart “predict, then optimize”},
  author={Elmachtoub, Adam N and Grigas, Paul},
  journal={Management Science},
  volume={68},
  number={1},
  pages={9--26},
  year={2022},
  publisher={INFORMS}
}

@article{kallus2022stochastic,
  title={Stochastic optimization forests},
  author={Kallus, Nathan and Mao, Xiaojie},
  journal={Management Science},
  year={2022},
  publisher={INFORMS}
}

@misc{dhis2,
  title        = {DHIS2 — Digital Health Information System 2},
  howpublished = {\url{https://dhis2.org/}},
  year         = {2025}
}

@misc{GlobalDataLab,
  author       = {Global Data Lab},
  title        = {Global Data Lab — Open data on human development, health, education, wealth, and more},
  year         = {2025},
  howpublished = {\url{https://globaldatalab.org/}}
}

@techreport{WHO,
  author      = "World Health Organization (WHO) Team",
  title       = {Allocation logic and algorithm to support
allocation of vaccines secured through the
COVAX Facility},
  institution = "World Health Organization",
  year        = "2021"
}

@article{agrawal2019differentiable,
  title={Differentiable convex optimization layers},
  author={Agrawal, Akshay and Amos, Brandon and Barratt, Shane and Boyd, Stephen and Diamond, Steven and Kolter, J Zico},
  journal={Advances in neural information processing systems},
  volume={32},
  year={2019}
}

@article{huang2020catalytic,
  title={Catalytic prior distributions with application to generalized linear models},
  author={Huang, Dongming and Stein, Nathan and Rubin, Donald B and Kou, SC},
  journal={Proceedings of the National Academy of Sciences},
  volume={117},
  number={22},
  pages={12004--12010},
  year={2020},
  publisher={National Acad Sciences}
}

@article{yang2021multilocation,
  title={Multilocation newsvendor problem: Centralization and inventory pooling},
  author={Yang, Chaolin and Hu, Zhenyu and Zhou, Sean X},
  journal={Management science},
  volume={67},
  number={1},
  pages={185--200},
  year={2021},
  publisher={INFORMS}
}

@article{braa2007building,
  title={Building collaborative networks in Africa on health information systems and open source software development-experiences from the HISP/BEANISH Network},
  author={Braa, Jorn and Muquinge, Humberto},
  journal={IST Africa},
  volume={3},
  year={2007}
}

@article{rubin1976inference,
  title={Inference and missing data},
  author={Rubin, Donald B},
  journal={Biometrika},
  volume={63},
  number={3},
  pages={581--592},
  year={1976},
  publisher={Oxford University Press}
}

@misc{card1993minimum,
  title={Minimum wages and employment: A case study of the fast food industry in New Jersey and Pennsylvania},
  author={Card, David and Krueger, Alan B},
  year={1993},
  publisher={National Bureau of Economic Research Cambridge, Mass., USA}
}

@misc{angrist1995identification,
  title={Identification and estimation of local average treatment effects},
  author={Angrist, Joshua and Imbens, Guido},
  year={1995},
  publisher={National Bureau of Economic Research Cambridge, Mass., USA}
}

@article{donti2017task,
  title={Task-based end-to-end model learning in stochastic optimization},
  author={Donti, Priya and Amos, Brandon and Kolter, J Zico},
  journal={Advances in neural information processing systems},
  volume={30},
  year={2017}
}

@techreport{ghndr2018sierra,
  title        = {Africa Health Service Delivery in Sierra Leone: Results of 2018 Service Delivery Indicator Survey},
  institution  = {GHNDR and GEDDR Africa, World Bank},
  year         = {2018},
  month        = {June}
}

@misc{nmsa_act2017,
  title        = {National Medical Supplies Agency Act, 2018},
    author = {NMSA}, 
  howpublished = {\url{https://nmsa.gov.sl/index.php/the-nmsa-act/}},
  year         = {2017}
}

@misc{WHO_2024_GlobalHealthEstimates,
  author       = {WHO},
  title        = {Global health estimates: Leading causes of DALYs},
  howpublished = {\url{https://www.who.int/data/gho/data/themes/mortality-and-global-health-estimates/global-health-estimates-leading-causes-of-dalys}},
  year         = {2024},
  institution  = {World Health Organization},
}

@article{droti2019poor,
  title={Poor availability of essential medicines for women and children threatens progress towards Sustainable Development Goal 3 in Africa},
  author={Droti, Benson and O’Neill, Kathryn Patricia and Mathai, Matthews and Dovlo, Delanyo Yao Tsidi and Robertson, Jane},
  journal={BMJ Global Health},
  volume={4},
  number={Suppl 9},
  pages={e001306},
  year={2019},
  publisher={BMJ Specialist Journals}
}

@article{yang2012walking,
  title={Walking distance by trip purpose and population subgroups},
  author={Yang, Yong and Diez-Roux, Ana V},
  journal={American journal of preventive medicine},
  volume={43},
  number={1},
  pages={11--19},
  year={2012},
  publisher={Elsevier}
}

@inproceedings{ludwig2024unreasonable,
  title={The unreasonable effectiveness of algorithms},
  author={Ludwig, Jens and Mullainathan, Sendhil and Rambachan, Ashesh},
  booktitle={AEA Papers and Proceedings},
  volume={114},
  pages={623--627},
  year={2024},
  organization={American Economic Association 2014 Broadway, Suite 305, Nashville, TN 37203}
}

@misc{WorldBank2025,
  author       = {{World Bank}},
  title        = {Crude birth rate (per 1,000 people) - Sierra Leone},
  year         = {2025},
  howpublished = {\url{https://data.worldbank.org/indicator/SP.DYN.CBRT.IN?locations=SL}},
  note         = {Accessed: 2025-08-21}
}

@misc{UNICEF_2025_SierraLeoneData,
  author       = {UNICEF},
  title        = {Sierra Leone: Demographics, Health \& Infant Mortality — Country Profile},
  howpublished = {\url{https://data.unicef.org/country/sle/}},
  year         = {2025},
  institution  = {United Nations Children’s Fund},
}

\bibliographystyle{Science}

\section*{End Notes}

\paragraph*{Acknowledgments}
We are grateful for the close partnership offered by the Sierra Leone Ministry of Health and Sanitation and the National Medical Supplies Agency; this partnership stemmed from an earlier collaboration with Macro-Eyes and its team members (Suvrit Sra, Vahid Rostami, Ashley Schmidt, Musa Komeh, Lydia Bernard-Jones, and Rene Ishiwe). We acknowledge invaluable research assistance from our team of RAs (Allan Zhang, Cheng-Ying Wu, Norris Chen, and Hingis Chang). This draft benefited from helpful feedback from Prashant Yadav, Gerard Cachon, Gad Allon, and participants at the Machine Learning for Health, Symposium on Artificial Intelligence in Learning Health Systems (SAIL), INFORMS Annual Conference, Marketplace Innovation Workshop, MSOM Healthcare SIG, Purdue Operations Conference, and Workshop on AI \& Analytics for Social Good.

\paragraph*{Funding:} This research was supported by generous funding from the Wharton AI \& Analytics Initiative, Wharton Global Initiatives, and the Wharton Mack Institute for Innovation Management. 

\paragraph*{Author contributions:}
HB, OB, and AC contributed to the design and methodology of the research and to the writing of the manuscript. AC analyzed the results, supervised by HB and OB. AC, PB, JA, LS, and FS contributed to the implementation of the research. 

\paragraph*{Competing interests:}

There are no competing interests to declare.

\paragraph*{Ethics \& Inclusion Statement:} This research was conducted in close partnership with the Sierra Leone Ministry of Health and Sanitation (MoHS) and the National Medical Supplies Agency (NMSA), subject to a data-sharing Non-Disclosure Agreement (NDA) and a formal Memorandum of Understanding (MOU) with the Government of Sierra Leone. The MoHS and NMSA oversaw the system’s deployment and integration into national workflows. We collaborated closely with policymakers and frontline personnel, conducted training sessions, and aligned our approach with existing government systems and priorities. Roles and responsibilities were clearly defined in advance, and system ownership was transferred to the government after nationwide deployment.

\paragraph*{Materials \& Correspondence.}  Correspondence should be addressed to hamsab@wharton.upenn.edu and obatani@seas.upenn.edu

\newpage
\setcounter{section}{0}
\setcounter{page}{1}

\setcounter{figure}{0}
\renewcommand{\thefigure}{SI~\arabic{figure}}

\setcounter{table}{0}
\renewcommand{\thetable}{SI~\arabic{table}}

\begin{center}
\section*{Supplementary Information}
\end{center}

\S\ref{sup:one} describes the data and methods supporting the design of our allocation system; \S\ref{sup:evaldeploy} describes the deployment of our system and the empirical evaluation of its effectiveness.

\section{Data and Methods}
\label{sup:one}

First, we outline our data sources (\S\ref{sup:datasets}) and detail how we processed the raw data for training our machine learning model and our evaluation (\S\ref{sup:dataprocessing}). Second, we review the allocation mechanism used in Sierra Leone prior to adopting our approach (\S\ref{sup:ExistingAllocation}). Then, we formalize resource allocation as an optimization problem (\S\ref{sup:optimization}). Finally, we present our end-to-end machine learning pipeline for demand estimation and decision-aware allocation (\S\ref{sup:mlframework}), which spans: 
\begin{itemize}
\item Multi-Task Learning (\S\ref{sup:multitasklearning}): We exploited cross-facility and cross-product patterns to improve predictive accuracy with limited data.
\item Catalytic Priors (\S\ref{sup:catalyticpriors}): We constructed catalytic priors from population estimates to address data quality issues such as censoring and missing values. 
\item Decision-Aware Learning (\S\ref{sup:dawarelearning}): We proposed a novel method for re-weighting training data to align the prediction objective with the downstream optimization objective, thus shifting the focus from prediction accuracy to improving public health. 
\end{itemize}

\subsection{Datasets}
\label{sup:datasets}

\paragraph{List of public health facilities.}

We obtained information on the ID, latitude, longitude, and type of public health facilities in Sierra Leone~\cite{SierraLeoneMoHS_CIESIN_2023}; this data was cross-verified with frontline staff at the National Medical Supplies Agency (NMSA).

\paragraph{Consumption data.} We constructed outcomes and features for demand prediction based on data extracted from the District Health Information Software 2 (DHIS2) used by Sierra Leone Ministry of Health and Sanitation (MoHS) to collect and manage health data. The country transitioned from paper-based to electronic reporting of health data in public health facilities in 2019~\cite{martin2020implementing}. We extracted monthly facility-level data on consumption, opening balance, closing balance, and stockouts of 62 medicines and medical supplies across all the public health facilities from January 2020 to November 2023. 36 of these products were allocated via our tool (see Table~\ref{tab:medicine_consumption_treat1}) and the remaining 26 were allocated via existing mechanisms (see Table~\ref{tab:medicine_consumption_treat0}); the latter was either because the allocation for these products was controlled by an external third party,
or because there was too little historical data to use our system. Data from products allocated via existing mechanisms was used only to support our multi-task prediction model, and for the robustness check of our main analysis using product-level controls (``Alt. Control'').

\paragraph{Supply data.}
We collected expiry dates\footnote{In line with the NMSA's existing practice, we prioritized allocating near-expiry products to larger districts, where higher consumption minimizes waste by ensuring supplies are used up prior to expiration.} and available supply of all medicines and medical supplies per quarter from mSupply~\cite{mSupply}, a pharmaceutical logistics and warehouse management system used in Sierra Leone (and over 40 other countries). Prior to each allocation quarter, local staff are required to conduct central stock counts and record the information in mSupply. In addition, we used invoice records from the mSupply system to determine the stock received by each public health facility to evaluate compliance for district-level allocations.

\paragraph{Catchment population.} 
Estimating granular population is particularly challenging in developing countries. To address this challenge, we leveraged multiple publicly available datasets to estimate each health facility's catchment population (i.e., the number of people each health facility is expected to serve): the WorldPop Global Project Population Data~\cite{WorldPopData}, the global friction surface dataset (\verb|Oxford/MAP/friction_surface_2019|) from Google Earth~\cite{weiss2020global}, and satellite imagery~\cite{didan2015mod13q1} also accessed through Google Earth. We used this data to create population estimates for our catalytic priors (see \S\ref{sup:catalyticpriors}).

\subsection{Data Processing} 
\label{sup:dataprocessing}

\paragraph{Training data for demand prediction model.} The data for training our demand prediction model is derived from the historical consumption data extracted from DHIS2. We used this time series data to construct both the demand $\xi_{t,n}^*$ in the current period $t$ for facility $n$ that we are trying to predict as well as the features $x_{t,n}$ for prediction. For example, we constructed facility-specific features, including: consumption, product, facility ID, facility type, latitude and longitude of the facility's geo-location, district, average consumption of the product for the facility in the past $\{1,2,3,4,5,6\}$ months, standard deviation of the consumption in the past 3 and 6 months, total sample size for the facility-product pair, year, month, average consumption of the product across facilities in the past $\{1,2,3,4,5,6,10\}$ months. These features were determined based on domain knowledge and feature engineering.

This data can sometimes be unreliable due to random or inconsistent data entry at individual facilities, requiring careful preprocessing. First, we excluded any observations where the inflow and outflow are inconsistent---i.e.,
\begin{align*}
&\text{Closing Balance} \\
&\neq\text{Opening Balance} + \text{Quantity Received} - \text{Quantity Dispensed} + \text{Adjustment/Loss}
\end{align*}
Second, we removed observations where all recorded quantities were zero, since this likely indicates the use of a default value. Third, we excluded extreme outliers---specifically, the top and bottom 5\% of values, to mitigate the impact of potential data entry errors.

One major challenge is that we only observe consumption, which does not equal demand when there is a stockout; this issue is called \emph{demand censoring}~\cite{besbes2013implications}. To ensure we are predicting actual demand, we used a standard strategy where we dropped censored observations~\cite{greene2000econometric}. In particular, when constructing $(x_{t,n},\xi_{t,n}^*)$ pairs for training, we only included observations where no stockout occurred; then, consumption is equal to the demand, so we can take it to be $\xi_{t,n}^*$. One limitation of this strategy is that it introduces covariate shift, since there may be systematic differences between time periods/facilities where stockouts occur and those where they do not occur. Covariate shift has the potential to degrade performance of the model compared to what is expected based on test set evaluation. Thus, we used unbiased population-based models that do not suffer from censoring as a catalytic prior when training our predictive models to mitigate this covariate shift (see \S\ref{sup:catalyticpriors}).

\paragraph{Panel data for evaluation.}

Our econometric evaluation uses the same historical DHIS2 data as above. In this case, the main preprocessing we performed is to remove unreliable data---i.e., excluding any observations where the inflow and outflow are inconsistent and removing observations where all recorded quantities were zero, as described above. Importantly, our evaluation is performed using consumption outcomes, which are not affected by demand censoring (we also perform a number of checks to ensure robustness to data missingness; see \S\ref{sup:robustnesschecks}). As we show in \S\ref{sup:consumption}, increasing consumption is mathematically equivalent to reducing unmet demand.

We used data from 2022 Q3 to 2023 Q3. We started in 2022 Q3 since this was the quarter when the NMSA began using a standardized, data-driven Excel allocation tool by Crown Agents; prior to this quarter, allocation procedures were inconsistent, which can prevent reliable counterfactual estimation. We constructed a balanced panel dataset with 1,058 facilities for evaluation.

The scale of quarterly consumption at each facility depends on the type and size of the facility's catchment population. To account for these differences, we normalized each product's consumption at the facility level by subtracting the mean consumption of that product across all facilities and dividing by the standard deviation:
\begin{align*}
\text{NormalizedConsumption}_{n,m} &= \frac{\text{Consumption}_{n,m} - \text{MeanConsumption}_m}{\text{StdConsumption}_m},
\end{align*}
where $n$ represents the facility, $m$ represents the product, $\text{MeanConsumption}_m$ is the average consumption of product $m$ across facilities, and $\text{StdConsumption}_m$ is the standard deviation of product $m$ across facilities. For each facility in each quarter, we then calculated the average normalized consumption that is available at that facility as:
\begin{align*}
\text{FacilityAverageNormalizedConsumption}_n &= \frac{\sum_{m \in \text{AvailableProducts}_n} \text{NormalizedConsumption}_{n,m}}{\text{AvailableProducts}_n}.
\end{align*}
Typically, this sum is only over the products that we allocated via our tool (except in our staggered SynthDiD approach, where we compared average normalized consumption of products we allocated vs. products we did not allocate). This approach ensures that the facility-level average consumption reflects the relative performance across products while accounting for the variability in the consumption level of different products. We show the time series trends of average normalized consumption for treatment vs. control groups in Fig.~\ref{fig:raw} in the main paper; these raw data trends qualitatively support our main findings in the paper.

\subsection{Existing Allocation Approach}
\label{sup:ExistingAllocation}

Each quarter, the National Medical Supplies Agency (NMSA) of Sierra Leone allocates approximately 70-100 free healthcare products specifically for women and children under five years old, with supply primarily dependent on international donations. Our system focuses exclusively on allocating these products. Notably, the National Medical Supplies Agency (NMSA) Act of 2017 establishes the NMSA as the sole entity responsible for the procurement and distribution of these products to all public health institutions~\cite{nmsa_act2017}. Public facilities are the only source where patients can obtain these specific medications for free. While private providers may exist, they are not part of this supply chain, and a majority of patients are unlikely to purchase expensive drugs that are available for free. Thus, efficient allocation is critical for equitable access.

The distribution of central stock to facilities occurs quarterly based on a centralized two-stage push system~\cite{terwiesch2006matching}, where supplies move from the central government to districts, and then to local health facilities. We focused on a subset of 36 products that are regularly distributed---chosen in collaboration with NMSA officials prior to our deployment---ensuring sufficient historical data for model training.

Until the deployment of our tool in 2023 Q2, the process of computing the allocation of central stock to each health facility relied primarily on a complex Excel tool. An important aspect was organizing health facilities into three administrative categories: District Medical Stores (DMS), District Hospitals (DH), and Western Area Hospitals (WAH). The DMS includes four facility types: Community Health Centers (CHC), Community Health Posts (CHP), Maternal and Child Health Posts (MCHP), and Clinics. Prior to each allocation cycle, all public health facilities submit requests based on their recent three-month rolling average of consumption. Workers can provide this information based on DHIS2, mSupply, or their professional judgment. 

Upon receiving all facility requests, the NMSA implements a structured allocation process. First, the NMSA determines the distribution proportions among the three primary healthcare facility categories, typically 70\% of total stock to DMS across 16 districts, 15\% to DH, and 15\% to WAH. Following this initial distribution, each district receives a specific allocation based on multiple criteria, including district population, poverty levels, product types, and submitted requests. For example, if a DMS in a particular district is allocated 10\% of the DMS share, it receives a quantity calculated as
\begin{align*}
\text{total available central stock} \times 0.7 \times 0.1.
\end{align*}
In cases where central stock remains after the initial distribution and requests remain partially fulfilled, the NMSA makes further allocations based on the unfulfilled requests.

Once the allocations have been finalized, the distribution process follows a two-tier delivery system. The NMSA executes the ``first mile'' delivery to all districts, after which each district manages the ``last mile'' distribution to individual health facilities under the NMSA's guidance and supervision.

Our system was designed to address several limitations of this Excel-based approach. Most importantly, the existing approach predicted demand based on the average demand over the previous three months; while this strategy is standard, it fails to capture shifts in demand due to seasonal fluctuations or demand spikes due to natural disasters or disease outbreaks. Our system achieves better demand forecasting through a combination of multi-task learning, catalytic priors, and decision-aware learning; in addition, it also significantly automates and improves the data processing pipeline. The Excel tool relied on manually collecting requests at the district level; however, due to tight timelines for making allocation decisions, the government often used outdated records instead. Our system also performs systematic data validation to identify potentially erroneous entries such as unexpectedly large numbers of zeros or inconsistencies in stock balance calculations. Finally, our system incorporates stock expiry dates into allocation decisions, which prioritizes sending near-expiring products to larger facilities with more consistent demand to minimize waste.

\subsection{Supply Chain Optimization Algorithm}

We designed an optimization algorithm to allocate limited medical resources to health facilities to minimize total unmet demand, defined as the total number of units of requested product that goes unfulfilled. The unmet demand objective is aligned with the goals of NMSA policymakers since it reflects the quality of care provision and directly affects patient outcomes, particularly in regions with limited access to alternative care options.

Our system optimizes the allocation of each product separately.
There is a fixed total available central stock $b\in\mathbb{R}$ (which we refer to as the \emph{budget}) to be distributed across $N\in\mathbb{N}$ facilities. Each facility $n\in[N]$ has an estimated demand $\Xi_n$, where $\Xi\in\mathbb{R}^N$ is a real-valued random vector with an estimated distribution $\mathbb{P}_{\Xi}$. We denote the allocation decision and facility stock on hand as $a\in\mathbb{R}^N$ and $s\in\mathbb{R}^N$, respectively, where $a_n$ is the allocation intended for facility $n$ and $s_n$ is the stock on hand at each facility $n$. Demand $\Xi_n$, allocation $a_n$, and facility stock $s_n$ are all measured in units of product. Then, the expected unmet demand measures the amount of unmet demand on average across facilities and over $\Xi$:
\begin{align}
\label{eq:obj}
\mathbb{E}_{\Xi}\left[\sum_{n\in[N]}\max\{\Xi_n-a_n-s_n,0\}\right].
\end{align}
If the budget $b$ is very large, we can choose all $a_n$ sufficiently high to ensure that our objective is minimized at approximately $0$. If the budget $b$ is very low compared to the total excess demand $\sum_{n\in [N]} (\Xi_n - s_n)^+$, then most facilities will suffer stockouts, and we cannot do much better than 
$(\sum_{n\in [N]} (\Xi_n - s_n)^+) - b.$ However, we found that the budget is often on the order of the total excess demand (i.e., $b \approx \sum_{n\in [N]} (\Xi_n - s_n)^+$), potentially because the budget is adjusted over time to meet demand. In this case, many facilities are both over- and under-stocked; therefore, minimizing the objective requires setting the allocations $a_n$ to be as close to the excess demand $(\Xi_n - s_n)^+$ as possible.

We focused on allocating over a single period instead of multi-period allocation. This is because we found that forecasting demand beyond one quarter is far too noisy to be of value.

\paragraph{Optimization strategy.}
\label{sup:optimization}

When $\Xi$ is constant, the optimal policy can be straightforwardly expressed as a linear program. To account for the uncertainty in $\Xi$, we use \emph{sample average approximation} (SAA), which takes $K$ demand samples from the estimated demand distribution $\xi^{(k)}\sim\mathbb{P}_{\Xi}$ (for $k\in[K]$), and then optimizes the objective on average across these samples. The resulting optimization problem is
\begin{align}
\label{eqn:lp}
a^* = &\operatorname*{arg\,min}_{a\in\mathbb{R}^N}\frac{1}{K}\sum_{k=1}^K\sum_{n=1}^N c_n^{(k)} \\
&\text{subj. to}\quad
  c_n^{(k)}\ge \xi_n^{(k)}-a_n-s_n,
  \quad \forall k\in[K],\ \forall n\in[N], \nonumber\\
&\phantom{\text{subj. to}\quad}
  c_n^{(k)}\ge 0,
  \quad \forall k\in[K],\ \forall n\in[N], \nonumber\\
&\phantom{\text{subj. to}\quad}
  a_n\ge 0,
  \quad \forall n\in[N], \nonumber\\
&\phantom{\text{subj. to}\quad}
  \sum_{n=1}^N a_n\le b. \nonumber
\end{align}

where vector inequalities are element-wise, $c_n^{(k)}$ denotes the unmet demand for facility $n$ in sample $k$, and $b$ is the budget. The first two constraints ensure  $c_n^{(k)}=\max\{\xi_n^{(k)}-a_n-s_n,0\}$, with one of these constraints necessarily binding to minimize the objective. The last constraint ensures the total allocation does not exceed the budget.

\paragraph{Warehouse matching.}
\label{sup:warehouse} On the basis of the final allocation generated by our system, we determined the specific inventory in warehouses that should be shipped to each facility. In line with Sierra Leone Ministry of Health's Free Healthcare Initiative policy, we preferentially allocated faster-expiring stock to higher-volume facilities in which it was more likely to be used prior to expiration. In particular, for each product, we first ranked the stock based on time to expiration. Then, we iterated through facilities on the basis of a ranking provided by the NMSA (typically, facilities with larger catchment populations are ranked higher). For each facility, we allocated stock with the earliest expiry date across all warehouses, continuing until the facility's allocation was fully met.

\subsection{Machine Learning Framework} 
\label{sup:mlframework}

We present our machine learning framework for demand forecasting. Our system trains a random forest using multi-task learning and catalytic priors; to do so, it uses a novel decision-aware learning approach to better align predictions with the downstream optimization loss. 

\subsubsection{Multi-Task Learning}
\label{sup:multitasklearning}
A common approach for demand estimation in supply chain management and global health is to fit a distribution to historical consumption patterns, e.g., one could estimate the mean and variance of each facility-product pair separately on the basis of its historical data. Except where otherwise noted, we elide the product from our notation; our algorithm applies the procedure we describe to train one model per product category (specifically, one for medicines and one for medical supplies) and uses it to generate predictions for each product. Mathematically, it can be viewed as solving the following maximum likelihood problem:
\begin{align*}
\tilde{\ell}(\mu,\sigma)=-\sum_{t=1}^T\sum_{n=1}^N\log\mathcal{N}(\xi_{t,n}^*;\mu_n,\sigma_n^2),
\end{align*}
where $\mathcal{N}(x;\mu_n,\sigma_n^2)$ is the Gaussian probability density function in $x$ with mean $\mu_n$ and standard deviation $\sigma_n$ and $\tilde{\ell}(\mu,\sigma)$ is the negative log-likelihood. This objective $\tilde{\ell}$ decomposes across facilities $n\in[N]$, and the solution for a given $n$ is the empirical mean and variance. However, this strategy cannot learn dynamic patterns such as seasonal effects or demand that is elevated for a period of time (e.g., due to an outbreak). Time series models such as ARIMA are also infeasible, owning to the limited number of observations we have for each facility-product pair---instead, we must leverage cross-facility and cross-product correlations. 

Multi-task learning allows us to train a single model on multiple interrelated tasks (i.e., the different facility-product pairs). By aggregating data and transferring knowledge across related tasks, multi-task learning increases the effective sample size for each task~\cite{caruana1997multitask}. Intuitively, demand may exhibit similar patterns across different facilities (e.g., seasonal trends, regional trends), and training a single model enables trends observed in a large fraction of facilities to be extended to other facilities for which data may be more scarce. Consider the following hypothetical example: suppose that the model observes a rise in demand for antimalarial medicines during the rainy season at several data-rich rural clinics; then, it can apply this pattern to other rural clinics for which there is insufficient data to identify this pattern. Thus, patterns in one facility directly inform predictions for other facilities, increasing the overall prediction accuracy with limited data.

In more detail, we first categorized all products into two types (medicines, or medical supplies and equipment, as shown in Tables~\ref{tab:medicine_consumption_treat1} \& \ref{tab:medicine_consumption_treat0}); we trained separate predictive models for these two categories. We associated each demand observation $\xi_{t,n}^*$ with a covariate vector $x_{t,n}\in\mathbb{R}^d$, which consists of features constructed from (1) the facility $n$, (2) the time step $t$, and (3) the historical demand over the $k$ previous steps $\xi_{t-k,n}^*,\xi_{t-k+1,n}^*,...,\xi_{t-1,n}^*$ (it also includes features of the product being allocated), based on domain knowledge from consultations with local medical experts and policy makers along with extensive feature engineering. Specifically, our prediction model uses the following covariates: lagged consumption, product, facility ID, facility type, latitude and longitude of the facility's geo-location, district, average consumption of the product for the facility in the past $\{1,2,3,4,5,6\}$ months, standard deviation of the consumption in the past 3 and 6 months, total sample size for the facility-product pair, year, month, average consumption of the product across facilities in the past $\{1,2,3,4,5,6,10\}$ months.

Then, for each category (medicines or medical supply/equipment), we trained a single model on the resulting dataset $\{(x_{t,n},\xi_{t,n}^*)\}_{t\in[T],n\in[N]}$. In particular, we aimed to train predictors $\mu_{\theta}(x)$ and $\sigma_{\theta}(x)$ for the mean and standard deviation, respectively, where $\theta\in\Theta$ are the parameters, using the following objective:
\begin{align}
\label{eqn:nll}
\tilde{\ell}(\theta)=-\sum_{t=1}^T\sum_{n=1}^N\log\mathcal{N}(\xi_{t,n}^*;\mu_\theta(x_{t,n}),\sigma_\theta(x_{t,n})^2).
\end{align}
In practice, we found that the following strategy works well. First, we used a random forest to fit the mean $\mu_{\theta}(x_{t,n})$, assuming the variance is constant. Then, we fit $\sigma_{\theta}(x_{t,n})$ only on the basis of historical data for the facility $n$ and the current product in question.

To evaluate our multi-task learning strategy, we compared to two techniques that do not use multi-task learning. First, we considered the 3-month rolling average, which is the prediction strategy used by the existing Excel tool
and more broadly is a common strategy for demand forecasting in LMICs~\cite{USAID2014HealthCommodities, Kenya2016FamilyPlanning}. Second, we compared to a standard distribution modeling approach, where we fit a demand distribution for each facility-product pair based only on historical data from that facility-product pair. We examined the fit of 46 well-known candidate distributions (e.g., normal, beta, gamma, exponential) and chose the Nakagami distribution as the one that best minimized the sum of squared errors between the fitted probability density function (PDF) and a histogram of the historical data.

We also compared to two other standard model families, LASSO regression and neural networks (NN), trained using the same multi-task learning pipeline as our random forest.
In this comparison, we focused on evaluating the mean squared error (MSE) of $\mu_{\theta}$, since we fit $\sigma_{\theta}$ separately. Results are shown in Fig.~\ref{fig:PredErrorCP}; as can be seen, random forests consistently achieved lower MSE across representative products on a held-out test set.

\begin{figure}[t]
\begin{center}
\includegraphics[width=0.95\linewidth]{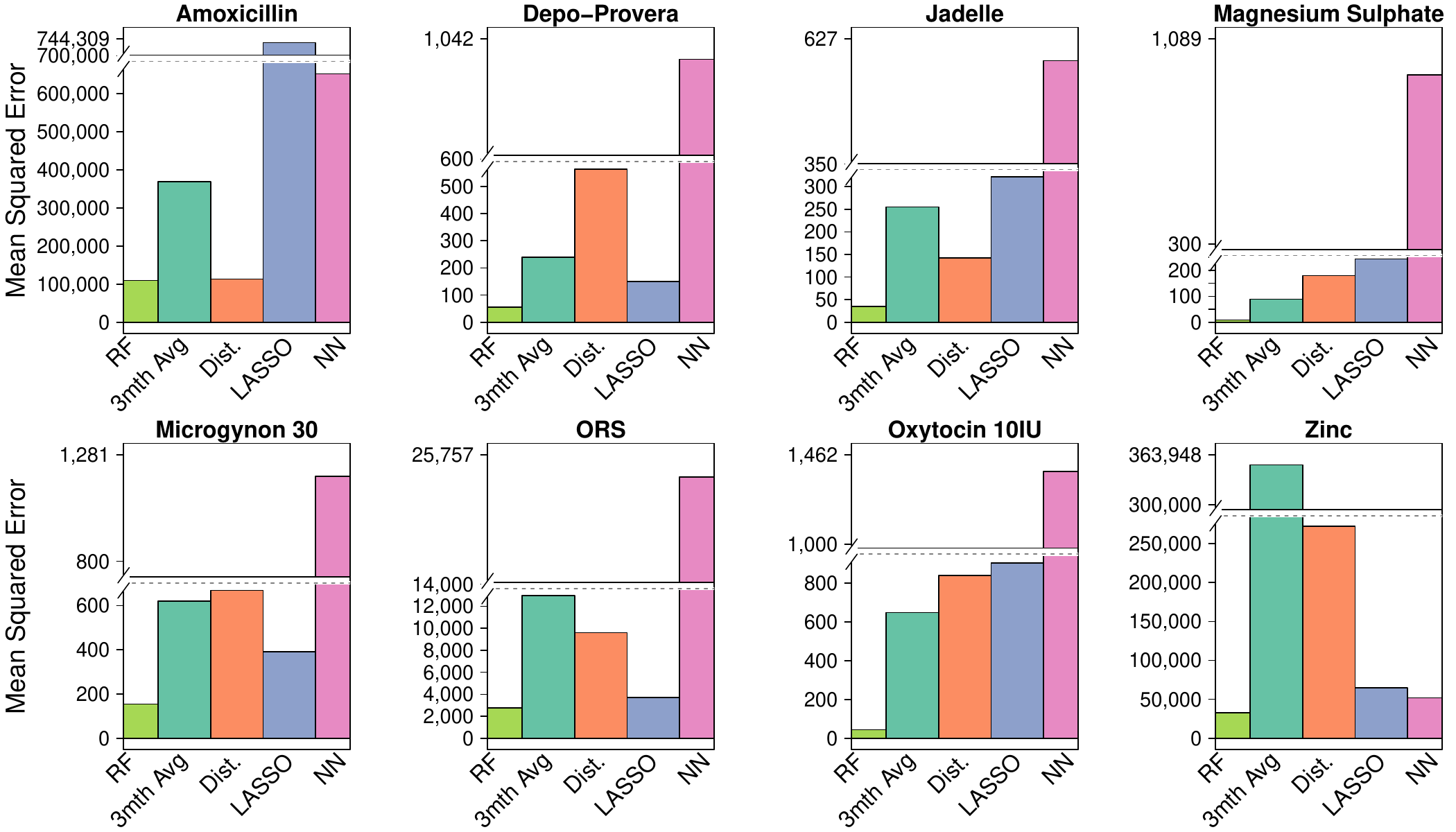}
\caption{\textbf{Prediction error comparison between different model families.} Random forests (RF) yield the smallest prediction error for representative essential medicines chosen by the NMSA, compared to using a rolling 3-month average (3mth Avg), distribution modeling (Dist.) described in \S\ref{sup:multitasklearning}, LASSO regression (LASSO), and neural networks (NN).}
\label{fig:PredErrorCP}
\end{center}
\end{figure}

\subsubsection{Catalytic Priors} 
\label{sup:catalyticpriors}

We implemented \emph{catalytic priors}~\cite{huang2020catalytic} as a safeguard to improve our model's robustness to inequities in data quality (arising from missing data or censoring). Oftentimes, low quality data come from poorer districts---thus, a model trained only on the available data may be biased in poorer districts, which may create unintentional disparities in predictive performance and downstream resource allocation. 

A natural strategy for mitigating such bias is to incorporate auxiliary data sources that are less likely to suffer from bias. For example, in public health, a standard approach is population-based resource allocation (PBRA), which uses population estimates to guide proportional resource allocation needs~\cite{Stevens2016GATHER}. Although population-based prediction is noisier (i.e., higher variance) because it cannot capture time-dependent patterns (e.g., seasonality of demand), it is less biased since the observations do not suffer from nonrandom missingness or censoring. Catalytic priors~\cite{huang2020catalytic} allow us to suitably trade off bias and variance by regularizing our random forest with the simpler but less biased population-based model, which predicts demand only on the basis of an estimate of the at-risk population in the catchment of a health facility.

For a given product, denote the prediction of the population-based model by $\xi_{t,n}^0=\mu_C(p_n)=r\cdot C\cdot p_n$, where $p_n$ is our estimated population in the catchment of facility $n$; $r \in \mathbb{R}$ is an estimated multiplier derived from census data to account for the at-risk population (defined by the percentage of women and children in a given area); and $C\in\mathbb{R}$ is a single \textit{product-specific} parameter estimated from our historical data (i.e., the average quarterly demand per unit of at-risk population). The catchment population $p_n$ served by each health facility is not readily available, so we estimated it as follows:
\begin{enumerate}
\item{First, we collected data on geographic coordinates of health facilities in Sierra Leone from several sources, including Google Maps and Geo-Referenced Infrastructure and Demographic Data for Development (GRID3)~\cite{SierraLeoneMoHS_CIESIN_2023}.}
\item{Next, we used Google Earth engine’s satellite imagery datasets to compute the normalized difference vegetation index (NDVI) on a 10km$\times$10km patch around each facility at monthly resolution between January 2022 and December 2022. NDVI serves as a proxy for vegetation density, which can indicate human activity.\footnote{NDVI is derived from satellite images using the formula: 
$\text{NDVI}=(\text{NIR} - \text{red}) / (\text{NIR} + \text{red})$. Since vegetation reflects light in the near-infrared (NIR) spectrum and absorbs light in the red spectrum, areas with higher photosynthesis activity exhibit larger NDVI values.
}}
\item{Then we used ``friction surface'' data from Google Earth~\cite{weiss2020global} to obtain the travel time between every facility and pixel of the area with potential human activity. This allows us to define the catchment area on the basis of minimal travel time.}
\item{Finally, we estimated $p_n$ for health facility $n$ by using data from WorldPop~\cite{WorldPopData}, which provides population count  estimates for each 100m$\times$100m grid cell.}
\end{enumerate} 
We estimated $r$, the proportion of women and children, using 2015 Sierra Leone Census data~\cite{SLCensus2015} at the chiefdom level.\footnote{Chiefdoms are a more granular administrative unit than districts; there are 190 chiefdoms in Sierra Leone. We further verified these estimates using data from the United Nations Office for the Coordination of Humanitarian Affairs (OCHA)~\cite{HDX_HAPI_Population}}

Then, we followed \cite{huang2020catalytic}, generating synthetic observations from $\mu_C$ to act as a Bayesian prior for our random forest. In particular, we used $\mu_C$ to construct a single synthetic example $(x_{t,n},\mu_C(p_n))$ for each facility $n\in[N]$ and time period $t\in[T]$. We then trained $\mu_{\hat\theta}$ on a weighted combination of the original dataset and this synthetic dataset. Intuitively, the synthetic dataset regularizes $\mu_{\hat\theta}$ towards a stable estimate $\mu_C$ in data-poor regions of the covariate space.

\subsubsection{Decision-Aware Learning}
\label{sup:dawarelearning}

The next step is to incorporate our predictions into the optimization model to generate allocation decisions. However, the model is usually trained to forecast demand using a standard objective such as mean-squared error (MSE), which focuses on minimizing prediction error and ignores the decision error in the downstream optimization problem, making it \emph{decision-blind}. This can result in poor performance since it may not focus the capacity of the machine learning model on predictions that are actually relevant to making decisions \cite{bertsimas2020predictive,elmachtoub2022smart}.

Recent work has proposed algorithms that incorporate the downstream optimization objective as a \emph{decision loss} in the training algorithm~\cite{kotary2021end, bertsimas2020predictive}. We found that existing decision-aware learning algorithms were either computationally intractable at our scale or incompatible with the rest of our prediction and optimization pipeline. Thus, we developed a novel and light-weight decision-aware learning approach; it relies only on re-weighting observations in model training, which can be easily integrated with existing data pipelines.

In our setting, the decision loss is
\begin{align*}
\ell(\hat\mu;\mu^*)=L(a^*(\hat\mu),\mu^*),
\end{align*}
where
\begin{align*}
L(a;\mu)=\sum_{n=1}^N\mathbb{E}_{\Xi_n\sim\mathcal{N}(\mu_n,\sigma^2)}\left[\max\{\Xi_n-a_n-s_n,0\}\right],
\end{align*}
is the unmet demand for allocation $a\in\mathbb{R}^N$ assuming the true demand for facility $n\in[N]$ is $\mathcal{N}(\mu_n,\sigma^2)$ (recall that for training the random forest, we have assumed that the standard deviation is a fixed value $\sigma$), and where
\begin{align*}
a^*(\mu) 
&= \arg\min_{a \in \mathbb{R}^N} L(a;\mu) \\
\text{s.t.}\quad 
& a_n \ge 0 \quad (\forall n \in [N]),\quad \sum_{n=1}^{N} a_n \le b .
\end{align*}

is the optimal allocation assuming the demand distributions for $n\in[N]$ is $\mathcal{N}(\mu_n,\sigma^2)$. In other words, the decision loss is the expected unmet demand incurred when using the predictions $\hat\mu$ in our optimization problem. To address objective mismatch, we could train $\mu_{\theta}$ to directly minimize the decision loss:
\begin{align*}
\hat\theta=\operatorname*{\arg\min}_{\theta\in\Theta}\sum_{t=1}^T\sum_{n=1}^N\ell(\mu_\theta(x_{t,n});\mu_{t,n}^*).
\end{align*}
Algorithms for doing so have been proposed in the setting of linear regression~\cite{elmachtoub2022smart}, and in the more general setting of differentiable model families by taking gradients through the optimization problem~\cite{wilder2019melding, wang2020automatically}. However, existing techniques are often limited to specific prediction setups or become computationally intractable for large-scale problems~\cite{kallus2022stochastic, wilder2019melding, wang2020automatically}. 

Our strategy is to Taylor expand the optimal decision loss, which we will show can be interpreted as up-weighting data more relevant to the downstream optimization problem. This approach can also be easily integrated with existing pipelines and is flexible enough to handle a broad range of model families. In particular, we approximated the decision loss by Taylor expanding it around $\hat\mu-\mu_0$ (where $\mu_0$ is the current prediction and $\hat\mu=f_\theta(x)$), yielding
\begin{align}
\label{eqn:taylor}
L(a^*(\hat{\mu});\mu^*)
\approx
L(a^*(\mu_0);\mu^*)
+
\nabla_aL(a^*(\mu_0);\mu^*)^\top\nabla_\mu a^*(\mu_0)^\top(\hat{\mu}-\mu_0).
\end{align}
Since the first term is a constant, we can ignore it; in particular, we have
\begin{align*}
\ell(\hat\mu;\mu^*)
&\approx\nabla_aL(a^*(\mu_0);\mu^*)^\top\nabla_\mu a^*(\mu_0)^\top(\hat\mu-\mu_0)+\text{const} \\
&=\nabla_aL(a^*(\mu_0);\mu^*)^\top\nabla_\mu a^*(\mu_0)^\top(\hat\mu-\mu^*)+\text{const}.
\end{align*}
Here, we have replaced $\mu_0$ with $\mu^*$; since both of these are constants, it does not affect the optimal solution. With this replacement, we can upper bound the term by its absolute value to avoid ``overshooting'' $\xi^*$, resulting in a weighted absolute error loss:
\begin{align*}
\ell(\hat\mu;\mu^*)
&=\sum_{t=1}^T\sum_{n=1}^Nw_{t,n}(\hat\mu_{t,n}-\mu^*_{t,n})+\text{const} \\
&\le\sum_{t=1}^T\sum_{n=1}^N|w_{t,n}|\cdot|\hat\mu_{t,n}-\mu^*_{t,n}|+\text{const},
\end{align*}
where
\begin{align*}
w_{t,n}=\left(\nabla_{\mu}a^*(\mu_0)^\top\nabla_aL(a^*(\mu_0);\mu^*)\right)_{t,n}
\end{align*}
Our algorithm uses this upper bound as the loss function, which works with any standard machine learning algorithm that can take weighted examples. In particular, we trained our random forest to minimize this loss on the training data:
\begin{align*}
\hat\theta=\operatorname*{\arg\min}_{\theta}\sum_{t=1}^T\sum_{n=1}^N|w_{t,n}|\cdot|\mu_{\theta}(x_{t,n})-\xi_{t,n}^*|.
\end{align*}
Note that we do not observe the true demand $\mu_{t,n}^*$, so we use $\xi_{t,n}^*$ as an estimate. Finally, as a heuristic, we replaced the absolute error with the squared error, which is more computationally efficient.

An important insight here is that the weight can be interpreted as re-weighting training examples. The $w_n$ can be decomposed as two gradients, and this can be computed numerically efficiently for a general class of convex programs \cite{agrawal2019differentiable}. In our case, we derived the weights analytically by solving the optimization problem:
\begin{align*}
a^*(\mu)
=&\operatorname*{\arg\min}_{a\in\mathbb{R}^N}
\sum_{n=1}^N
\mathbb{E}_{\Xi_n\sim\mathcal{N}(\mu_n,\sigma^2)}
\left[\max\{\Xi_n-a_n-s_n,0\}\right] \\
&\text{subj. to}\quad
a_n\ge0~(\forall n\in[N]),
\quad
\sum_{n=1}^Na_n\le b.
\end{align*}

Letting $\Xi_n=\mu_n+\eta_n$, where $\eta_n\sim\mathcal{N}(0,\sigma^2)$ i.i.d., we can form the Lagrangian:
\begin{align*}
L(a,\lambda) = \sum_{n=1}^{N} \mathbb{E}_{\eta_n} \left[ \max \{\mu_n + \eta_n - a_n - s_n, 0\} \right] + \lambda_0\left( b - \sum_{n=1}^{N} a_n \right) - \sum_{n=1}^N\lambda_n a_n.
\end{align*}
The KKT conditions are
\begin{align*}
0 &= \nabla_{a_n} L(a^*(\mu),\lambda^*(\mu)) 
= -\mathbb{P}_{\eta_n} \left[ a^*_n(\mu) \leq \mu_n + \eta_n - s_n \right] + \lambda_0^*(\mu) - \lambda_n^*(\mu) \\
0 &= \nabla_{\lambda_0} L(a^*(\mu),\lambda^*(\mu)) = \sum_{n=1}^{N} a^*_n(\mu) - b \\
0 &= \lambda_n^*(\mu)a^*_n(\mu) \\
0 &\le \lambda_n^*(\mu) \\
0 &\le a^*_n(\mu).
\end{align*}
Let
\begin{align*}
\mathcal{I}(\mu) = \{ n \in [N] \mid \lambda_n^*(\mu) = 0 \}.
\end{align*}
Note that if $n\not\in\mathcal{I}(\mu)$, then $a_n^*(\mu)=0$, so the first condition becomes
\begin{align*}
a_n^*(\mu) &=
\begin{cases}
\mu_n - s_n + F_{\eta_n}^{-1}(1 - \lambda_0^*(\mu)) &\text{if }n \in \mathcal{I}(\mu), \\
0&\text{otherwise}.
\end{cases}
\end{align*}
where $F_{\eta_n}$ is the CDF of $\eta_n$. Since the $\eta_n$ are i.i.d., we write this CDF as simply $F_\eta$. Summing over $n\in[N]$, we have
\begin{align*}
b = \sum_{n=1}^N a_n^*(\mu) 
&= \sum_{n\in\mathcal{I}(\mu)} \left(\mu_n - s_n + F_\eta^{-1}(1 - \lambda_0^*(\mu))\right) \\
&= \left(\sum_{n\in\mathcal{I}(\mu)}\mu_n - s_n\right) + |\mathcal{I}(\mu)|\cdot F_\eta^{-1}(1 - \lambda_0^*(\mu)).
\end{align*}
Thus, we have
\begin{align*}
F_\eta^{-1}(1 - \lambda_0^*(\mu)) = \frac{b - \sum_{n\in\mathcal{I}(\mu)}(\mu_n - s_n)}{|\mathcal{I}(\mu)|},
\end{align*}
so
\begin{align*}
a_n^*(\mu) &=
\begin{cases}
\mu_n - s_n + \frac{1}{|\mathcal{I}(\mu)|}\left(b - \sum_{m\in\mathcal{I}(\mu)}(\mu_m - s_m)\right) &\text{if }n \in \mathcal{I}(\mu), \\
0&\text{otherwise}.
\end{cases}
\end{align*}
Taking the derivative with respect to $\mu_m$ ($m\in\mathcal{I}(\mu)$), we have
\begin{align*}
\nabla_{\mu_m} a^*_n(\mu) = \delta_{m,n} - \frac{1}{|\mathcal{I}(\mu)|}.
\end{align*}
Thus, we have
\begin{align*}
\nabla_{\mu_m} L(a^*(\mu),\lambda^*(\mu)) &= \nabla_{\mu_m} \sum_{n=1}^{N} \mathbb{E}_{\eta_n} \left[ \max \{\mu_n + \eta_n - a^*_n(\mu) - s_n, 0\} \right] \\
&= \sum_{n=1}^{N} \mathbb{P}_{\eta_n} \left[ a^*_n(\mu) \leq \mu_n + \eta_n - s_n \right] \nabla_{\mu_m} a^*_n(\mu) \\
&= \sum_{n\in\mathcal{I}(\mu)} \mathbb{P}_{\eta_n} \left[ a^*_n(\mu) \leq \mu_n + \eta_n - s_n \right] \left(\delta_{m,n} - \frac{1}{|\mathcal{I}(\mu)|}\right) \\
&= \mathbb{P}_{\eta_m} \left[ a^*_m(\mu) \leq \mu_m + \eta_m - s_m \right] - \frac{1}{|\mathcal{I}(\mu)|} \sum_{n\in\mathcal{I}(\mu)} \mathbb{P}_{\eta_n}[a_n^*(\mu)\le\mu_n+\eta_n-s_n] \\
&= \mathbb{P}_{\eta_m} \left[ a_m^*(\mu) \leq \mu_m + \eta_m - s_m \right] + \text{const} \\
&\approx \mathbb{I}[a_m^*(\mu) \leq \mu_m - s_m] + \text{const},
\end{align*}
where the approximation on the last line holds when $\sigma$ is small. Using this gradient, our predictive model's objective can be approximated as
\begin{align*}
\hat\theta = \operatorname*{\arg\min}_{\theta} \sum_{t=1}^T \sum_{n\in\mathcal{I}(\mu)} (\mathbb{I}[a_{t,n}^*(\mu) \leq \mu_{t,n} - s_{t,n}] + c) \cdot |\mu_{\theta}(x_{t,n}) - \xi_{t,n}^*|.
\end{align*}
for some constant $c$. This loss up-weights the training examples that are more likely to experience unmet demand. One remaining issue is that we do not know $a_{t,n}^*(\mu)$, which is required for the computation of the weight. We approximated it by first training a decision-blind model, using it to optimize the allocations, and then using these allocations to determine the weights.

\subsubsection{Out-of-Sample Comparison}
\label{sup:outofsample}
Prior to deploying our framework, we validated it on historical data by showing that it outperforms several baselines on a held-out test set, including:
\begin{itemize}

\item{\textbf{Excel tool:}} Data-driven tool previously used in conjunction with DHIS2 by the NMSA to make allocations.

\item{\textbf{Decision-blind ablation:}} Follows our pipeline but uses the MSE loss to train the random forest instead of our decision-aware learning algorithm.

\item{\textbf{StochOptForest:}} Follows our pipeline but trains decision-aware random forests using the end-to-end optimization strategy by~\cite{kallus2022stochastic}.

\item{\textbf{Distribution modeling:}} Estimates demand via distribution modeling (as described in \S\ref{sup:multitasklearning}), and then optimizes allocations based on these forecasts.

\item{\textbf{Global Health:}} Estimates demand via a 3-month rolling average (described in \S\ref{sup:multitasklearning}, common in global health~\cite{USAID2014HealthCommodities, Kenya2016FamilyPlanning}), and then optimizes allocations based on these forecasts~\cite{WHO}.

\item{\textbf{Population-based:}} Allocates total available central stock to chiefdoms proportionally to their at-risk population (women and children), as commonly done in global health \cite{USAID2014HealthCommodities}. Within a chiefdom, all facilities are treated equally.\footnote{Note that current data sources do not provide facility-level population estimates in Sierra Leone; we construct such estimates from a variety of data sources in \S\ref{sup:catalyticpriors}.}
\end{itemize}
Here, we introduce the index $m\in[M]$ to denote products. We applied our framework and evaluate the unmet demand for each facility-product pair:
\begin{equation} \label{eq:unmetD}
\text{UnmetDemand}_{m,n} = 
\begin{cases} 
0 & \text{if } \mu_{m,n} - a_{m,n} - s_{m,n} < 0 \\
\mu_{m,n} - a_{m,n} - s_{m,n} & \text{otherwise}
\end{cases}
\end{equation}
for all products $m\in[M]$ and facilities $n\in[N]$, where $a_{m,n}$ is allocation decision and $\mu_{m,n}$ is the true demand. Then, we computed the total unmet demand across all facilities and divided it by total demand to obtain a normalized total unmet demand, and averaged across products:~\footnote{Note that we approximate total demand by total consumption.} 
\begin{align}
\text{NormalizedTotalUnmetDemand}=\frac{1}{\#\text{ products}}\sum_{m=1}^M\frac{\sum_{n=1}^N\text{UnmetDemand}_{m,n}}{\sum_{n=1}^N\mu_{m,n}}.
\end{align}
To test allocation performance in challenging environments, we focused our evaluation on lower budgets---specifically, we used the 25th percentile of quarterly budgets (by product) observed in our data. We then measured how much our approach reduces unmet demand compared to the baseline:
\begin{align}
\frac{\text{NormalizedTotalUnmetDemand}_{\text{Baseline}}-\text{NormalizedTotalUnmetDemand}_{\text{Ours}}}{\text{NormalizedTotalUnmetDemand}_{\text{Baseline}}}.
\end{align}
Our results are shown in Table~\ref{tab:improvement}; they illustrate that our approach outperforms other baselines. First, it performs significantly better than non-machine learning approaches, demonstrating the predictive power of machine learning. Next, StochOptForest most likely performs poorly since it is unable to integrate its decision-aware learning strategy with multi-task learning. At a high level, their algorithm assumes that there is a single optimization problem for each example in the dataset (i.e., one for each facility-product pair). However, in our problem, a single product's optimization problem is associated with many examples (i.e., observations across all facilities). Thus, to apply their approach, we need to decouple the optimization problem into a separate optimization problem for each facility, which we do using the optimal dual variable $\lambda$. However, this eliminates cross-learning between facilities, which may explain the poor results.
Finally, we also demonstrated a 5\% improvement compared to a purely decision-blind approach; while this improvement is comparatively smaller, a 5\% reduction in unmet demand still has significant implications for social welfare. Furthermore, it demonstrates the potential for decision-aware learning to improve performance even compared to powerful state-of-the-art models such as multi-task random forests.

\begin{table}[t]
\centering
\caption{\textbf{Average \% Improvement in Unmet Demand.} We compare our framework vs. various baselines on an out-of-sample historical dataset.}
\begin{tabular}{lr}
\toprule
\multicolumn{1}{c}{Method} & \multicolumn{1}{c}{Improvement \%} \\
\midrule
Our Framework & 0\% \\
Decision-Blind Ablation & 5\% \\
Population Based Census & 27\% \\
Distribution Modeling & 82\% \\
Global Health (3 Month Rolling Avg) & 88\% \\
StochOptForest & 92\% \\
Existing Excel Tool & 98\% \\
\bottomrule
\end{tabular}
\label{tab:improvement}
\end{table}

\section{Evaluation and Deployment}
\label{sup:evaldeploy}

In this section, we describe how we deployed our system (\S\ref{sup:deployment}) and econometrically evaluated its impact using SynthDiD (\S\ref{sup:mainanalysis}); we established an equivalence between consumption and unmet demand to justify this analysis (\S\ref{sup:consumption}). Then, we investigated real-world compliance to our allocations (\S\ref{sup:compliance}), analyzed the distributional impacts of our deployment (\S\ref{sup:heteroanalysis}), and performed multiple robustness checks to validate our main findings (\S\ref{sup:robustnesschecks}). Finally, we performed a cost-effectiveness analysis (\S\ref{sup:cost}).

\subsection{Deployment Details}
\label{sup:deployment}

The Sierra Leone national government deployed our system in 2023 Q2 for 5 randomly selected districts: Tonkolili, Falaba, Karene, Kono, and Pujehun; subsequently, it was rolled out nationwide in 2023 Q3. We present a balance table for pre-treatment covariates in Table~\ref{tab:balance_district};\footnote{This table is based on district-level covariates from the Global Data Lab~\cite{GlobalDataLab} and the Service Delivery Indicators Health Survey~\cite{ghndr2018sierra}. We constructed the corresponding treatment and control values by averaging over districts in each respective group, weighted by their population. One issue is that the districts in both data sources predated the 2017 creation of two new districts (Koinadugu split in two into Falaba and Koinadugu; Bombali split in two into Karene and Bombali); Falaba and Karene are treated districts whereas Koinadugu and Bombali are control districts. We took the covariates for Falaba and Karene to be the same as the ones for Koinadugu and Bombali, respectively.} as can be seen, there are no statistically detectable differences between the two groups. Furthermore, we present data missingness rates for treated and control districts in Table~\ref{tab:balanceMiss}. Prior to the deployment, the government had established predetermined supply levels for this period and allocated resources to control districts. The remaining supply was then assigned to the treatment districts, maintaining independence of supply quantities between the two groups. 

Before the implementation, we first conducted two training sessions for policymakers and frontline workers to provide them with a technical understanding of how our tool operated and what it did. We ensured that our allocation tool was compatible with the same formatted inputs and outputs as the prior Excel allocation tool, allowing users to maintain their existing workflows with minimal changes. Fig.~\ref{fig:WebApp} shows a screenshot of our web interface.

\begin{figure}[t]
\centering
\includegraphics[width=.8\linewidth]{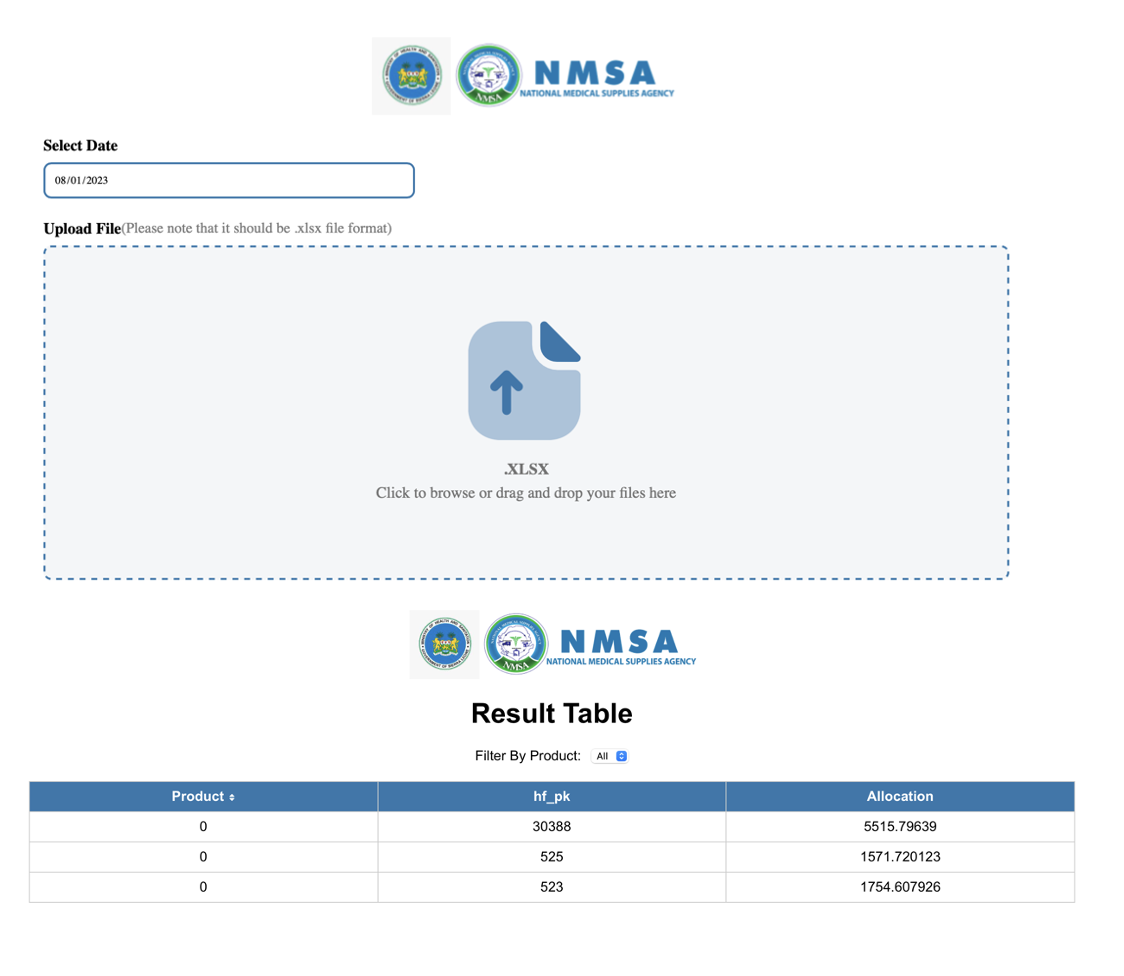}
\caption{\textbf{Our System's Web App Interface}: Consistent with their prior workflow, users upload an Excel sheet with total available central stock information (which is directly downloaded from the mSupply database) and then run the tool to obtain downloadable allocation results. This web app is now owned and operated by the Sierra Leone national government.}
\label{fig:WebApp}
\end{figure}

The deployment began in June 2023. The implementation timeline proceeded as follows: in the Falaba and Tonkolili districts, last-mile delivery to local health facilities was completed by mid-July, while in the Karene, Kono, and Pujehun districts, implementation extended to the end of July due to logistical delays caused by the presidential election in Sierra Leone. To evaluate the impact of the intervention, we primarily analyzed outcomes from 2023 Q2 to Q3. The government did not conduct an allocation in Q4 due to insufficient supply, arising from a central-level shortage of health aid from donors.

Our system functioned as a decision support tool, where central and district-level planners retain ultimate authority and can override the system when necessary. During deployment, when planners identified discrepancies between our recommendations and their practices (e.g., due to data lags, or a strategic need to maintain a larger safety stock for certain items that quarter), they were able to easily adjust the allocations. This human-in-the-loop approach was crucial for building trust and enabling context-specific adaptability. We found that decision-makers closely followed the algorithmic allocations---the normalized overlap between the actual and algorithmic allocations ranged from 0.89 to 1 (see \S\ref{sup:compliance} for details). 
The system automatically re-trains the machine learning model based on updated data for each new quarterly allocation, so that it continuously adapts to shifts in demand patterns. The system is hosted on AWS with government-controlled access, ensuring institutional ownership. We provided knowledge transfer to local personnel to build internal capacity for system maintenance.

\subsection{Main Analysis Methodology}
\label{sup:mainanalysis}

While the raw data in Fig.~\ref{fig:raw} shows a descriptive increase in consumption, the small number of treated districts necessitates an evaluation strategy that appropriately matches pre-treatment patterns across treated and control groups. To this end, we used SynthDiD~\cite{arkhangelsky2021synthetic} to analyze the impact of our deployment on patient consumption. SynthDiD identifies causal effects by ensuring that the difference between treated and synthetic control facilities remains stable before treatment. To achieve this goal, SynthDiD assigns two sets of weights: one for control facilities and another for time periods. The control facility weights are chosen so the weighted average of control facilities' outcomes best match the unweighted average of treated facilities' outcomes in the pre-treatment period. The time period weights ensure that the weighted average of the control facilities' outcomes in the pre-treatment period best match their unweighted average of the control facilities' outcomes in the post-treatment period. 

By combining these weights, SynthDiD constructs synthetic control facilities whose pre-treatment trends align with those of the treated facilities, thereby providing a credible estimate of the treatment’s causal impact and addressing key limitations in other commonly used estimators. Unlike standard difference-in-differences (DiD), which requires a parallel trends assumption, SynthDiD remains robust even when treatment and control groups show different trends before the intervention. Furthermore, it can effectively control for variations in outcomes that arise from both time-related and unit-specific factors, unlike standard synthetic controls, which requires a near-perfect match in pre-treatment levels.

Using unit weights $\hat{\omega}_n^{\text{sdid}}$ and time weights $\hat{\lambda}_t^{\text{sdid}}$ derived from Eqs.~(4) \&~(6) in~\cite{arkhangelsky2021synthetic}, the average effect of the treatment on the treated (ATT) is estimated as follows:
\begin{equation}
\left( \hat{\tau}^{\text{sdid}}, \hat{\mu}, \hat{\alpha}, \hat{\beta} \right) = 
\underset{\tau, \mu, \alpha, \beta}{\operatorname{argmin}} \;
\sum_{n=1}^N \sum_{t=1}^T 
\left( Y_{n,t} - \mu - \alpha_n - \beta_t - W_{n,t} \tau \right)^2 
\hat{\omega}_n^{\text{sdid}} \hat{\lambda}_t^{\text{sdid}}
\end{equation}
where the outcome $Y_{n,t}$ is the average consumption for facility $n$ at time $t$ across all allocated products; $\mu$ is the baseline average outcome, $\alpha_n$ are facility fixed effects, and $\beta_t$ are time period fixed effects. The treatment assignment $W_{n,t}$ indicates whether facility $n$ received treatment at time $t$, and $\tau$ is the treatment effect. Control facility weights $\hat{\omega}_n^{\text{sdid}}$ and time period weights $\hat{\lambda}_t^{\text{sdid}}$ account for differences between treated and control groups over time.

We independently performed our analysis using both the \textit{synthdid} package in R and the \textit{sdid} package in STATA, finding consistent results. We estimated standard errors using the jackknife (Algorithm~3 in \cite{arkhangelsky2021synthetic}). To validate our use of SynthDiD, we also performed an event study~\cite{clarke2023synthetic}, which showed that there are no statistically significant differences between treated facilities and control facilities prior to our intervention, and that the change in consumption emerged only after our system was deployed (see Fig.~\ref{fig:Main_Event}). We used Eq.~(8) in \cite{clarke2023synthetic}, which compares the treated-minus-synthetic-control difference in each time period $t$ to a baseline pre-treatment difference:
\begin{align*}
(\bar{Y}_t^{Tr} - \bar{Y}_t^{Co}) - (\bar{Y}_{\text{baseline}}^{Tr} - \bar{Y}_{\text{baseline}}^{Co}),
\end{align*}
where $\bar{Y}_{\text{baseline}}^{Tr}$ and $\bar{Y}_{\text{baseline}}^{Co}$ are the baseline means for the treated group and the synthetic control group, respectively. Unlike conventional event studies that choose a single pre-treatment period as the baseline, the SynthDiD framework selects the optimal pre-treatment weights $\hat{\lambda}_t^{sdid}$
\begin{align*}
\bar{Y}_{\text{baseline}}^{Tr} = \sum_{t=1}^{T_{\text{pre}}} \hat{\lambda}_t^{sdid}\bar{Y}_t^{Tr}
\quad \text{and} \quad
\bar{Y}_{\text{baseline}}^{Co} = \sum_{t=1}^{T_{\text{pre}}} \hat{\lambda}_t^{sdid}\bar{Y}_t^{Co}.
\end{align*}
Then, we constructed the event-study estimates from SynthDiD using the following steps:
\begin{enumerate}
\item \textbf{Initial Estimation:}
\begin{enumerate}
\item Fit SynthDiD on the full sample to obtain time period weights $\hat{\lambda}$ and the pre-treatment difference in outcomes.
\item Adjust post-treatment differences by subtracting the pre-treatment mean difference.
\end{enumerate}
\item \textbf{Bootstrap Inference:}
\begin{enumerate}
\item For $b \in\{ 1,\ldots,B\}$, resample the data and re-estimate SynthDiD.
\item Compute the bootstrap-adjusted difference series for each replication.
\item Use these bootstrap replicates to compute standard errors and form confidence intervals.
\end{enumerate}
\end{enumerate}

\begin{figure}[t]
\centering
\includegraphics[width=0.5\linewidth]{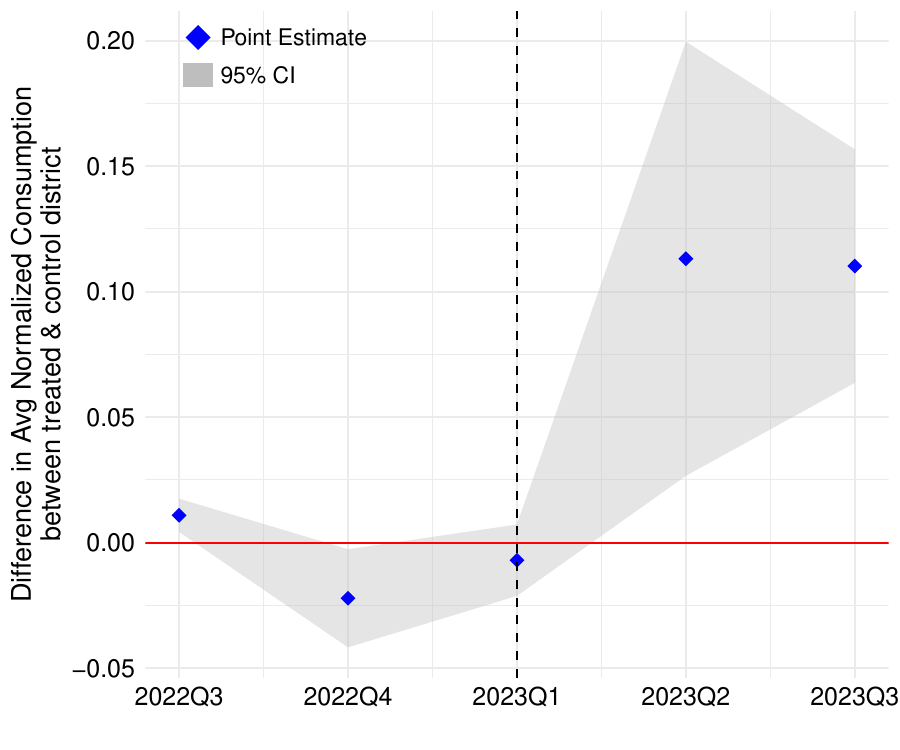}
\caption{\textbf{SynthDiD Event Study.} This plot shows the estimated ATT across time. Blue diamonds represent point estimates, and the shaded region represents 95\% confidence intervals. The dashed vertical line indicates the time period immediately prior to our 2023 Q2 deployment. As expected, the pre-treatment estimates are close to zero, while the post-treatment estimates are positive and large, indicating that our deployment increased consumption.}
\label{fig:Main_Event}
\end{figure}

\subsection{Equivalence of Consumption and Unmet Demand}
\label{sup:consumption}

One of the key challenges in evaluating our deployment is that unmet demand---the usual metric for evaluating the efficiency of inventory management policies---is censored since we do not observe when patient orders go unfulfilled. Instead, we use consumption as our main metric, which measures the total amount of each product distributed to patients across all facilities. First, we show that under a simple model of dispensing, maximizing total consumption and minimizing total unmet demand are mathematically equivalent in our optimization formulation. Next, we show that this equivalence is robust to stock-dependent dispensing (such as rationing or over-dispensing) and product substitution.

\paragraph{Basic Model.} In each quarter $t$, each facility $n\in[N]$ has current stock $s_{m,n,t}\in\mathbb{R}_{\ge0}$ of product $m\in[M]$ and needs to fulfill demand $\xi_{m,n,t}\in\mathbb{R}_{\ge0}$ until it runs out of stock. Our goal is to allocate medicines to each facility to minimize unmet demand. The unmet demand and consumption for product $m$ and facility $n$ in time period $t$ as a function of the allocation $a\in\mathbb{R}_{\ge0}$ of that product to that facility in that time period are
\begin{align*}
U_{m,n,t}(a)&=\max\{\xi_{m,n,t}-(s_{m,n,t}+a),0\} \\
C_{m,n,t}(a)&=\min\{\xi_{m,n,t},s_{m,n,t}+a\},
\end{align*}
respectively.
It is easy to check that
\begin{align*}
U_{m,n,t}(a)+C_{m,n,t}(a)=\xi_{m,n,t}
\end{align*}
for any allocation $a$. Given allocation decisions $\{a_n\}_n$ for each facility $n\in[N]$, we have
\begin{align*}
U_{m,t}(\{a_n\}_n)+C_{m,t}(\{a_n\}_n)=\xi_{m,t},
\end{align*}
where $U_{m,t}(\{a_n\}_n)=\sum_{n=1}^NU_{m,n,t}(a_n)$ is total unmet demand, $C_{m,t}(\{a_n\}_n)=\sum_{n=1}^NC_{m,n,t}(a_n)$ is total consumption, and $\xi_{m,t}=\sum_{n=1}^N\xi_{m,n,t}$ is the total demand for quarter $t$. Thus, total unmet demand plus total consumption does not depend on our allocation decisions $\{a_n\}_n$; therefore, the allocation that minimizes unmet demand also maximizes consumption.

Of course, this model makes a number of strong assumptions about how stock is distributed. We describe how to handle alternative behaviors: (1) rationing (i.e., when the facility is low on stock, it allocates less to patients than their need), (2) over-dispensing (i.e., when the facility has excess stock, it allocates more to patients than they need), and (3) substitution (i.e., when stock of one medication runs out, the facility allocates a substitute). We discuss rationing and over-dispensing in \S\ref{sup:rationing}, and substitution in \S\ref{sup:substitution}.

\subsubsection{Rationing and Over-Dispensing.}
\label{sup:rationing}

Providers may change their dispensing patterns dynamically as a function of the available stock remaining. For instance, if product stock is running low and the next allocation isn't anticipated for some time, a provider may ration stock by only allocating to high-risk patients; alternatively, if product stock is relatively high and the next allocation is anticipated soon, a provider may dispense to even low-risk patients who are not normally allocated. We show using a Markov Decision Process analysis that any such stock-dependent allocations still satisfy the equivalence of consumption and demand, as long as providers do not \textit{waste} stock (i.e., dispense to a patient with no demand, or dispense more than they could plausibly use). Waste is unlikely to occur in our setting due to the limited supply. We also performed an empirical analysis to try to detect rationing and over-dispensing behavior in our data; we found no evidence for such behaviors.

\paragraph{Theoretical analysis.} For this section, we redefine the variables $t$, $s_t$, $a_t$, and $\xi_t$ to be the index of the current patient, the current stock of a medicine at a given facility when patient $t$ arrives, the amount of medicine distributed to patient $t$, and the target demand of patient $t$, respectively. Specifically, we consider a model where (1) patients arrive at the facility sequentially over $T$ steps (indexed by $t\in[T]$), are distributed $a_t$ of the product and then leave; (2) each patient has a target demand $\xi_t$ depending on their type $x_t\in[K]$, and the facility distribution satisfies $a_t\le\xi_t$; and (3) unmet demand for that patient is $\xi_t-a_t$. Under this assumption, we prove that unmet demand and negative consumption are equivalent even in this general setting:
\begin{align*}
\bar{U}(\pi)+\bar{C}(\pi)=\text{const},
\end{align*}
where $\bar{U}(\pi)$ and $\bar{C}(\pi)$ are the expected unmet demand and consumption when using policy $\pi$, respectively. This result establishes under general conditions that consumption is a reasonable proxy for unmet demand.

We model the system as a finite-horizon MDP $(S,A,R,P,T)$, where the state is a pair $(s,x)\in S=(\mathbb{N}\cup\{0\})\times[K]$ consisting of the current stock $s$ and the type $x$ of the patient being served on step $t$, the action $a\in A=\mathbb{N}$ indicates the amount of medication to dispense to the patient, the reward $R(s,x,a)\ge0$ indicates the payoff for dispensing medicine to the patient (with $R(s,x,0)=0$), the transition measure $P(s,x,s',x')=\mathbbm{1}(s'=s-a)\cdot P(x')$ consists of a deterministic update $s'=s-a$ to the stock and an i.i.d. sample of the next patient type $x'$, and the time horizon is $T\in\mathbb{N}$. We assume that patient type $x=1$ indicates no patient, and $R(s,1,a)=0$ for all $a\in A$.
We impose the constraint that $a\le s$. We let $\xi(x)\in\mathbb{N}$ be the ideal amount of medication needed by patient of type $x$.

We assume the facility is using a state-dependent and time-dependent policy $\pi_t:S\to A$ to distribute medicine to patients. Since the state includes the current stock and the time encodes the time remaining until the end of the quarter, this policy can encode very complex rationing and over-dispensing strategies. As usual, a given policy $\pi$ induces a distribution $D^{(\pi)}$ over rollouts $\zeta=((s_1,x_1,a_1,r_1),...,(s_T,x_T,a_T,r_T))$. We let $\bar{U}(\pi)=\mathbb{E}_{\zeta\sim D^{(\pi)}}[U(\zeta)]$ and $\bar{C}(\pi)=\mathbb{E}_{\zeta\sim D^{(\pi)}}[C(\zeta)]$ denote the expected unmet demand and consumption, respectively. The following result shows that under this general model, unmet demand and negative consumption are equivalent metrics.

\begin{theorem}
Assume $a_t\le\xi_t$ (where $\xi_t=\xi(x_t)$) for all $t\in[T]$. Then, we have $\bar{U}(\pi)=\bar{\Xi}-\bar{C}(\pi)$, where $\bar{\Xi}$ is a constant independent of $\pi$.
\end{theorem}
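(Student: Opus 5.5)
The plan is to work pathwise first and take expectations at the end. For a rollout $\zeta=((s_1,x_1,a_1,r_1),\dots,(s_T,x_T,a_T,r_T))$, define the realized quantities
\begin{align*}
U(\zeta)=\sum_{t=1}^T(\xi(x_t)-a_t),\qquad C(\zeta)=\sum_{t=1}^T a_t .
\end{align*}
The hypothesis $a_t\le\xi_t$ guarantees that each per-patient unmet demand $\xi_t-a_t$ is nonnegative. This is exactly what lets us write unmet demand as this linear expression rather than as $\max\{\xi_t-a_t,0\}$. Summing over $t$ then gives the pathwise identity
\begin{align*}
U(\zeta)+C(\zeta)=\sum_{t=1}^T\xi(x_t)\qquad\text{for every rollout }\zeta .
\end{align*}
Taking expectations under $D^{(\pi)}$ and using linearity yields $\bar U(\pi)+\bar C(\pi)=\mathbb{E}_{\zeta\sim D^{(\pi)}}\bigl[\sum_{t}\xi(x_t)\bigr]$.

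The remaining and main step is to show that $\bar\Xi:=\mathbb{E}_{\zeta\sim D^{(\pi)}}[\sum_t\xi(x_t)]$ does not depend on $\pi$. The key structural fact is the form of the transition measure, $P(s,x,s',x')=\mathbbm{1}(s'=s-a)\cdot P(x')$. The next patient type $x_{t+1}$ is drawn from $P(\cdot)$ independently of the current state and action, so the policy influences only the stock coordinate. I would argue by induction on $t$ that the marginal law of $x_t$ under $D^{(\pi)}$ is $P$ for every $t$ and every $\pi$; for $t=1$ we assume the initial type is also drawn from $P$, or at least from a fixed initial distribution that does not depend on $\pi$. Concretely, conditioning on the history up to step $t$, the law of $x_{t+1}$ is $P$ regardless of $(s_t,x_t,a_t)$, so integrating out the history leaves marginal $P$.

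It then follows that
\begin{align*}
\bar\Xi=\sum_{t=1}^T\mathbb{E}_{x\sim P}[\xi(x)]=T\,\mathbb{E}_{x\sim P}[\xi(x)],
\end{align*}
which is a constant independent of $\pi$. Integrability is not an issue because the horizon $T$ is finite, and we assume $\xi$ has finite mean under $P$; this is automatic if $K$ is finite, since $x\in[K]$. Type $x=1$ is the no-patient case. To keep the identity consistent there, the constraint $a_t\le\xi_t$ forces $a_t=0$ whenever $\xi(1)=0$, so these steps contribute zero to both sides.

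I expect the independence-of-policy step to be the only real content. The care needed is to state explicitly that $\pi$ affects the trajectory only through $s$, and that the $x$-process is an i.i.d. sequence drawn exogenously. Everything else is the elementary accounting identity above.
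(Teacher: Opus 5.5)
Your proposal is correct and follows the paper's proof. Like the paper, you derive the pathwise identity $U(\zeta)+C(\zeta)=\Xi(\zeta)$ from $a_t\le\xi_t$, take expectations under $D^{(\pi)}$, and note that $\bar\Xi$ does not depend on $\pi$ because $\Xi(\zeta)$ is a function of the exogenous type sequence alone. Your induction showing that each $x_t$ has marginal law $P$ under every policy only spells out what the paper asserts in one line.
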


\begin{proof}
Given an arbitrary rollout $\zeta=((s_1,x_1,a_1,r_1),...,(s_T,x_T,a_T,r_T))$, the total unmet demand across $\zeta$ is
\begin{align*}
U(\zeta)=\sum_{t=1}^T\max\{\xi_t-a_t,0\}\cdot\mathbbm{1}(x_t\neq1),
\end{align*}
where $\xi_t=\xi(x_t)$, and consumption is
\begin{align*}
C(\zeta)=\sum_{t=1}^T\min\{a_t,\xi_t\}\cdot\mathbbm{1}(x_t\neq1).
\end{align*}
Under our assumption that $a_t\le\xi_t$, we have
\begin{align*}
U(\zeta)
=\sum_{t=1}^T(\xi_t-a_t)\cdot\mathbbm{1}(x_t\neq1)=\Xi(\zeta)-\sum_{t=1}^Ta_t\cdot\mathbbm{1}(x_t\neq1),
\end{align*}
and
\begin{align*}
C(\zeta)
=\sum_{t=1}^Ta_t\cdot\mathbbm{1}(x_t\neq1),
\end{align*}
where $\Xi(\zeta)=\sum_{t=1}^T\xi_t$; it follows that
\begin{align*}
U(\zeta)=\Xi(\zeta)-C(\zeta).
\end{align*}
Taking the expectation over $\zeta\sim D^{(\pi)}$, where $D^{(\pi)}$ is the distribution over rollouts induced by using policy $\pi$, we have
\begin{align*}
\bar{U}(\pi)=\bar{\Xi}-\bar{C}(\pi),
\end{align*}
where $\bar{\Xi}=\bar{\Xi}(\pi)=\mathbb{E}_{\zeta\sim D^{(\pi)}}[\Xi(\zeta)]$ is the expected total demand; note that $\Xi(\zeta)$ only depends on $\zeta$ through the sequence of patient covariates $(x_1,...,x_T)$, so $\bar{\Xi}(\pi)$ is independent of $\pi$.
\end{proof}
\paragraph{Empirical analysis.}

Next, we performed empirical analyses to try to detect rationing and over-dispensing behaviors. Ideally, we would have patient-level data to understand if providers are allocating differently to different risk profiles as a function of available stock; however, such data does not exist in a digitized, unified form in many developing countries due to infrastructure limitations. As a result, we are limited to analyzing aggregate monthly allocations.

First, we considered rationing---facilities with low inventory may preemptively reduce consumption (e.g., by allocating only to high-risk patients) even if they are sufficiently well stocked, in order to avoid a stockout in future months before the next allocation. Consider a facility in month $t-2$ (with an upcoming allocation in month $t$) that has sufficient stock for the current month $t-2$ but insufficient stock for the coming month $t-1$. If we see ``less than expected'' consumption in $t-2$, that may indicate rationing behavior.

To test this, we first identified facility-month pairs $(n,\tau)$ in our panel data where the opening balance is sufficiently large to likely avoid a stockout in that month.\footnote{If not, reduced consumption in that month would simply reflect a stockout rather than rationing behavior to protect stock for future months.} Specifically, we only considered $(n,\tau)$ pairs such that the initial stock $s_{n,\tau}$ exceeded the average consumption $\bar{Y}_n$ by some multiplicative constant, i.e., $s_{n,\tau}>C_2\bar{Y}_n$ for some $C_2\ge1$. Then, we estimated the following regression model:
\begin{align}
\label{eqn:rationingspec}
Y_{n,\tau} &= \beta_1 \text{LowStock}_{n,\tau} + \beta_2 \text{TimeIndicator}_\tau
+ \delta (\text{LowStock}_{n,\tau} \times \text{TimeIndicator}_\tau) \nonumber\\
&\qquad+ \alpha_n + \lambda_\tau + \gamma X_{n,\tau} + \epsilon_{n,\tau}
\end{align}
where $Y_{n,\tau}$ is the consumption of facility $n$ in month $\tau$; $\alpha_n$ and $\lambda_\tau$ are facility and time fixed effects, respectively; $X_{n,\tau}$ denotes controls (including all the features in our demand prediction model); $\text{LowStock}_{n,\tau}=\mathbbm{1}(s_{n,\tau}\le C_1\bar{Y}_n)$ for some $C_1 < 2$ indicates whether facility $n$ has likely insufficient opening balance in month $\tau$ to satisfy future demand in month $\tau + 1$, as estimated by a multiplier of its average demand $\bar{Y}_n$; and $\text{TimeIndicator}_{\tau}=\mathbbm{1}(\exists t\in\mathcal{T}_{\text{alloc}}\text{ s.t. }\tau=t-2)$ indicates whether $\tau$ is two months prior to a planned allocation $t\in\mathcal{T}_{\text{alloc}}$. The coefficient of interest is $\delta$; a negative value of $\delta$ would indicate that facilities reduce distribution in months where they have low stock in month $t-2$ to save for month $t-1$, where $t$ is an allocation month.

For robustness, we considered several choices of our constants---specifically, $C_1\in\{1.5,1.75\}$ and $C_2\in\{1.0,1.5\}$; note that we require $C_1 > C_2$. Results from these regressions are shown in Table~\ref{tab:rationing_interaction}. As can be seen, $\delta$ is not statistically significantly different from zero in any of them, so we found no evidence of rationing behaviors. We note that this empirical analysis is only suggestive, as we do not have the data to support a causal or more granular analysis.

\begin{table}[t]
\caption{\textbf{Analysis of Rationing Behavior.}}
\label{tab:rationing_interaction}
\centering
\begin{threeparttable}
\begin{tabular}{lcc}
\toprule
& \multicolumn{2}{c}{Condition on Opening Balance $C_2$} \\
\cmidrule(lr){2-3}
Initial Supply Level $C_1$ & $C_2=1.0$ & $C_2=1.5$ \\
\midrule
$C_1=1.5$ & 0.986 & -- \\
& (0.592) & \\
\addlinespace
$C_1=1.75$ & 0.360 & $-$0.822 \\
& (0.496) & (0.810) \\
\bottomrule
\end{tabular}
\begin{tablenotes}
[flushleft]
\footnotesize
\item \textit{Notes:} The table reports regression coefficients of the interaction term $\delta$. Standard errors are in parentheses: $^{*} p < 0.05$, $^{**} p < 0.01$.
\end{tablenotes}
\end{threeparttable}
\end{table}

Next, we considered over-dispensing---facilities with high inventory may preemptively increase consumption (e.g., by allocating to low-risk patients) in anticipation of an upcoming allocation. Consider a facility in month $t-1$ (with an upcoming allocation in month $t$) that has more than sufficient stock for the current month. If we see ``more than expected'' consumption in $t-1$, that may indicate a facility trying to over-dispense a surplus.

To test this, as before, we identified facility-month pairs in our panel data where the opening balance is sufficiently large to avoid a stockout in that month, i.e., $(n,\tau)$ pairs such that $s_{n,\tau}>C_2\bar{Y}_n$.\footnote{If not, increased consumption in that month would simply reflect the lack of a stockout rather than over-dispensing behavior to use up surplus stock.} Then, we estimated the following regression model:
\begin{align}
\label{eqn:excessspec}
Y_{n,\tau} &= \beta_1 \text{ExcessStock}_{n,\tau} + \beta_2 \text{TimeIndicator}_\tau
+ \delta (\text{ExcessStock}_{n,\tau} \times \text{TimeIndicator}_\tau) \nonumber\\
&\qquad+ \alpha_n + \lambda_\tau + \gamma X_{n,\tau} + \epsilon_{n,\tau}
\end{align}
where $\text{ExcessStock}_{n,\tau}=\mathbbm{1}(s_{n,\tau}\ge C_1\bar{Y}_n)$ for some $C_1 > 2$ indicates whether facility $n$ will likely have a significant surplus in month $\tau+1$ (given the upcoming allocation), as estimated by a multiplier of its average demand $\bar{Y}_n$; we re-defined $\text{TimeIndicator}_{\tau}=\mathbbm{1}(\exists t\in\mathcal{T}_{\text{alloc}}\text{ s.t. }\tau=t-1)$ to indicate whether $\tau$ is one month prior to a planned allocation $t\in\mathcal{T}_{\text{alloc}}$.

For robustness, we considered $C_1\in\{3.0,5.0\}$ and $C_2\in\{1.0,1.5\}$. Results from the resulting regressions are shown in Table~\ref{tab:oversupply_interaction}. As can be seen, $\delta$ is not statistically significantly different from zero in any of these regressions, so we found no evidence of over-dispensing behaviors. We note that this empirical analysis suffers from the same limitations noted earlier for rationing.

\begin{table}[t]
\caption{\textbf{Analysis of Over-Dispensing Behavior.}}
\label{tab:oversupply_interaction}
\centering
\begin{threeparttable}
\begin{tabular}{lcc}
\toprule
& \multicolumn{2}{c}{Condition on Opening Balance $C_2$} \\
\cmidrule(lr){2-3}
Initial Supply Level $C_1$ & $C_2=1.0$ & $C_2=1.5$ \\
\midrule
$C_1=3.0$ & 0.174 & 0.146 \\
     & (0.371) & (0.408) \\
\addlinespace
$C_1=5.0$ & 0.489 & 0.503 \\
     & (0.368) & (0.380) \\
\bottomrule
\end{tabular}
\begin{tablenotes}
[flushleft]
\footnotesize
\item \textit{Notes:} The table reports regression coefficients of the interaction term. Standard errors are in parentheses: $^{*} p < 0.05$, $^{**} p < 0.01$.
\end{tablenotes}
\end{threeparttable}
\end{table}

One concern could be that our estimates are under-powered. Thus, as a robustness check, we took the point estimates for rationing and over-dispensing at face value, and evaluated their impact on our main analysis. Specifically, we took a conservative approach (i.e., assume these behaviors only adversely affect consumption in treated facilities and do not affect control facilities), which translated into an approximately 2 percentage point reduction in consumption in treated facilities. Correspondingly, we adjusted our evaluation by reducing only the treated facilities’ consumption in the post-treatment period by 2 percentage points. We then re-ran our main analysis and found that our deployment still improved consumption by a statistically significant 16\%. This improvement is quite similar to the 19\% improvement found in our original analysis.

\subsubsection{Substitution}
\label{sup:substitution}

Another potential behavior we do not handle in our basic model is cross-product substitution. We show theoretically that consumption and unmet demand still remain equivalent under ``perfect'' product substitution, as long as we evaluate consumption at the level of product clusters (that mutually substitute for each other) with appropriate normalization, instead of evaluating consumption at the level of individual products. We first formalize and prove this result. Based on input from medical experts and field workers, we developed a list of all substitutable products among the ones we allocated; many of these can be considered perfect (e.g., medications with different dosages), for the remaining, perfect substitution was a reasonably good approximation. Then, we performed a robustness check where we re-ran our main analysis using the above strategy and obtained similar results as our main analysis.

\paragraph{Theoretical analysis.}

We consider substitution under the following assumptions: (1) substitutions are not case-dependent (i.e., if two medicines can either always be substituted, or never be substituted), (2) substitutions are transitive (i.e., if $m_1$ substitutes for $m_2$ and $m_2$ substitutes for $m_3$, then $m_1$ substitutes for $m_3$), and (3) substitutions are always at the same rate (i.e., if $m_1$ and $m_2$ are substitutes, then if a patient requires $a$ units of $m_1$, then they require $c_{m_1\to m_2}\cdot a$ units of $m_2$, where $c_{m_1\to m_2}$ is constant across patients (with $c_{m\to m}=1$)). While these assumptions are somewhat strong, they are accurate to first order for the products distributed by our system.

With these assumptions, we can straightforwardly adapt our analysis to handle substitutions. Specifically, rather than consider products separately, we first cluster products into $K$ groups $M_k\subseteq[M]$ that are mutually substitutable (which is possible by Assumptions~1 \&~2). For each group $k\in[K]$, we can choose one focal product $m\in M_k$ in that group and standardize the other products relative to $m$ to obtain the following aggregated stock, allocation, and demand:
\begin{align*}
\tilde{s}_{k,n,t}&=\sum_{m'\in M_k}c_{m'\to m}\cdot s_{m',n,t} \\
\tilde{a}_{k,n,t}&=\sum_{m'\in M_k}c_{m'\to m}\cdot a_{m',n,t} \\
\tilde\xi_{k,n,t}&=\sum_{m'\in M_k}c_{m'\to m}\cdot\xi_{m',n,t}.
\end{align*}
To account for substitution, we can simply analyze consumption using the normalized variables instead of the original ones.

\paragraph{Empirical analysis.}

Using the strategy outlined above, we conducted a robustness check of our main analysis based on the normalized product consumption to account for potential substitution behavior; we discuss this analysis in detail in \S2.6.5. We found a consistent, statistically significant 18\% increase in the consumption of these normalized product groups, demonstrating that substitution is unlikely to be the primary driver behind our treatment effect.

\subsection{Compliance Analysis}
\label{sup:compliance}

Our system was implemented as a decision support tool that could be overridden by district pharmaceutical managers, who may selectively override our recommendations to address shortcomings in our allocation decisions to improve performance (though they may also make mistakes that lead to worse outcomes). To measure compliance, we manually collected the local pharmacist's allocation document and cross-checked all the invoices pulled from mSupply. To quantify compliance of a treated district, we normalized the allocation quantity for each facility-product pair in that district, calculated the absolute difference between the actual allocation and our deployed suggestion, and then averaged this value across facility-product pairs. Results are shown in Table~\ref{tab:compliance}; we found that compliance was generally very high, with the Kono and Pujehun districts having slightly lower compliance than the other three districts. This is likely due to logistical and communication issues that arose during the implementation of our system in 2023 Q2, which were resolved soon after. In \S\ref{ssec:LATE}, we disentangled the effect of compliance with our recommendations in 2023 Q2 from the impact of our system through a standard instrumental variables analysis to compute the Local Average Treatment Effect (LATE). We found a statistically significant 37\% improvement in consumption among compliers, suggesting that compliance with our system's allocations was generally beneficial.

\begin{table}[t]
\centering
\caption{\textbf{Compliance of Treated Districts in 2023 Q2}}
\begin{tabular}{lc}
\toprule
District & Normalized Avg Absolute Diff \\
\midrule
Tonkolili & 0.000 \\
Falaba    & 0.028 \\
Karene    & 0.039 \\
Kono      & 0.073 \\
Pujehun   & 0.109 \\
\bottomrule
\end{tabular}
\label{tab:compliance}
\end{table}

\subsection{Distributional Impact Analysis}
\label{sup:heteroanalysis}

To understand the distributional impacts of our allocation system, we conducted additional analyses across several dimensions.

The first dimension focuses on facility type; given limited resources, NMSA policymakers emphasized prioritizing larger facilities (Hospitals and Community Health Centers (CHCs)) to ensure that care remains available to the greatest number of people. We checked whether these facilities are correctly prioritized by our system. Specifically, we categorized facilities into two groups: (1) larger, better equipped facilities (Hospitals and Community Health Centers (CHCs)), and (2) smaller, community-based facilities (Community Health Posts (CHPs) and Maternal and Child Health Posts (MCHPs)). Then, we conducted our main analysis separately on each group; results are shown in Table~\ref{tab:heterogeneity_full}. As can be seen, our positive treatment effect remains strong and statistically significant for larger facilities, suggesting that our algorithm effectively improves consumption at facilities that are well-equipped and consistently operational (consistent with NMSA's stated priorities). In contrast, smaller health posts show a smaller, statistically insignificant improvement. They may face more fundamental challenges (e.g., inadequate staffing or equipment) that limit their ability to provide services or utilize medical supplies, regardless of availability. For instance, during a field visit, we found a small health post closed for two consecutive days, and local residents shared that they often seek essential care at larger, more reliable facilities. Improving allocation efficiency for larger facilities is desirable since these facilities are relied on to provide consistent care; furthermore, we did so without impacting (or even slightly improving) efficiency for smaller facilities.

Next, we performed two analyses to better understand the equity implications of our system. First, we examined the impact on previously under-served facilities; specifically, a facility is \emph{under-served} if it experienced at least one stockout in the data from 2020 until the quarter before our deployment. Our results are shown in Table~\ref{tab:heterogeneity_full}; as can be seen, our results are substantially larger (compared to our overall effect of 19\%) and statistically significant for under-served facilities. Second, we examined distributional impacts on urban vs. rural facilities, based on geographic location and accessibility; in Sierra Leone, rural facilities tend to be poorer than urban ones. We categorized facilities as urban if they were within a walkable distance of a city or town~\cite{simplemaps_worldcities_2025} (either within 0.25 miles based on \cite{yang2012walking}, or within 1 mile as a robustness check) and rural otherwise. As can be seen from Table~\ref{tab:heterogeneity_full}, the treatment effect is positive and statistically significant for rural facilities (it is also positive but statistically insignificant for urban facilities, but this may be due to the smaller sample size). This suggests that our allocation system succeeded at reducing unmet demand at rural (poor) facilities, likely improving equity.

\begin{table}[t]
\centering
\caption{\textbf{Distributional Analysis Results.} Columns 1-2 compare large facilities (Hospitals and CHCs) vs. smaller, community-based facilities (CHPs and MCHPs). Columns 3-6 analyze rural vs. urban facilities based on distance to a city (0.25 or 1 mile). Column 7 examines under-served facilities that have experienced at least one stockout since 2020. We found that our deployment increased consumption in large facilities, rural facilities, as well as under-served facilities.}
\label{tab:heterogeneity_full}

\begin{threeparttable}
\scriptsize
\setlength{\tabcolsep}{4pt}
\renewcommand{\arraystretch}{0.9}

\resizebox{\textwidth}{!}{
\begin{tabular}{lccccccc}
\toprule
\textit{Dep. var.: Normalized Consumption} & \multicolumn{2}{c}{By Facility Type} & \multicolumn{4}{c}{By Location} & \multirow{4}{*}{\begin{tabular}[c]{@{}c@{}}Under-\\ Served\end{tabular}} \\
\cmidrule(lr){2-3} \cmidrule(lr){4-7}
& Large & Small & \multicolumn{2}{c}{0.25 miles} & \multicolumn{2}{c}{1 mile} &  \\
\cmidrule(lr){4-5} \cmidrule(lr){6-7}
& Facility & Facility & Rural & Urban & Rural & Urban &  \\
& (1) & (2) & (3) & (4) & (5) & (6) & (7) \\
\midrule
Treatment Effect & 0.368$^{*}$ & 0.052 & 0.107$^{**}$ & 0.291 & 0.090$^{*}$ & 0.334 & 0.182$^{**}$ \\
& (0.183) & (0.037) & (0.043) & (0.395) & (0.044) & (0.315) & (0.069) \\
[1ex]
Observations & 1,075 & 4,215 & 4,865 & 425 & 4,430 & 860 & 4,055 \\
[1ex]
Improvement \% & 36\% & 10\% (Insig) & 19\% & 24\% (Insig) & 16\% & 32\% (Insig) & 32\% \\
\bottomrule
\end{tabular}
}
\begin{tablenotes}\footnotesize
\item \textit{Note:} Standard errors in parentheses. $^{*}p < 0.05$, $^{**}p < 0.01$, $^{***}p < 0.001$.
\end{tablenotes}
\end{threeparttable}
\end{table}

\subsection{Robustness Checks}
\label{sup:robustnesschecks}

We performed a number of robustness checks to support our main results, summarized in Table~\ref{tab:all_appendix}. Specifically, we performed difference-in-differences (\S2.6.1), geographic matching (\S2.6.2), imputation strategies (\S2.6.3), differential missingness analysis (\S2.6.4), and a substitution-robust analysis (\S2.6.5). We also re-ran our main analysis using alternative controls (\S2.6.6), examined the treatment effect on compliers (\S2.6.7), and examined stockouts (\S2.6.8).

\begin{table}[t]
\centering
\caption{\textbf{Robustness Analysis Results.}}
\label{tab:all_appendix}
\begin{threeparttable}
\small
\begin{tabular}{l c c c c}
\toprule
\textbf{Model} & \textbf{Coefficient} & \textbf{Std. Error} & \textbf{Observations} & \textbf{Improvement \%} \\
\midrule
\multicolumn{5}{l}{\textit{Dependent Variable = Normalized Consumption}} \\
\midrule
SynthDiD & 0.116$^{*}$ & (0.046) & 5,290 & 19\% \\
DiD & 0.128$^{**}$ & (0.046) & 5,290 & 21\% \\
Matching (5km) & 0.154$^{*}$ & (0.066) & 1,125 & 25\% \\
Matching (10km) & 0.166$^{**}$ & (0.064) & 1,815 & 30\% \\
Matching (15km) & 0.121$^{*}$ & (0.054) & 2,415 & 21\% \\
Imputation (low rank) & 0.037$^{**}$ & (0.014) & 5,455 & 15\% \\
Imputation (avg consump) & 0.067$^{*}$ & (0.031) & 5,455 & 21\% \\
Imputation (pop) & 0.076$^{**}$ & (0.028) & 5,455 & 27\% \\
No Missingness Imbalance & 0.132$^*$ & (0.058) & 5,285 & 18\% \\
Substitution (Method 2) & 0.132$^**$ & (0.048) & 5,290 & 20\% \\
Alt. Control & 0.095$^{***}$ & (0.022) & 10,520 & 18\% \\
Substitution (Method 1) & 0.119$^*$ & (0.048) & 5,290 & 18\% \\
LATE (Continuous) & 0.115$^{**}$ & (0.038) & 5,290 & 37\% \\
\midrule
\multicolumn{5}{l}{\textit{Dependent Variable = Stockout}} \\
\midrule
SynthDiD & $-$0.280 & (0.179) & 5,290 & $-$4.6\% (Insig) \\
\midrule
\multicolumn{5}{l}{\textit{Dependent Variable = Missingness}} \\
\midrule
SynthDiD & $-$0.007 & (0.006) & 5,290 & $-$1.4\% (Insig) \\
\bottomrule
\end{tabular}
\begin{tablenotes}[flushleft]
\footnotesize
\item \textit{Notes:} $^{*} p < 0.05$, $^{**} p < 0.01$, $^{***} p < 0.001$. Standard errors in parentheses. 
The balanced dataset includes 1,058 facilities across five quarters from 2022 Q3 to 2023 Q3. The Improvement \% column reports the relative change in consumption compared to the counterfactual mean for each specification.
\end{tablenotes}
\end{threeparttable}
\label{tab:allr3}
\end{table}

\subsubsection{Difference-in-Differences (DiD).}

First, we ensured our results are consistent under a simpler DiD analysis~\cite{ashenfelter1984using}, which compares changes in outcomes over time between the treated and control groups. Fig.~\ref{fig:eventDiD} shows the corresponding event study, which shows that there are no statistically significant differences between treated and control facilities prior to our intervention. The DiD specification is:
\begin{equation}
Y_{n,t} = \mu + \alpha_n + \beta_t + \tau (\text{Treat}_n \times \text{Post}_t)
+ \epsilon_{n,t},
\end{equation}
where $Y_{n,t}$ is the observed outcome (i.e., normalized consumption) for unit $n$ at time $t$; $\mu$ denotes the mean outcome; $\alpha_n$ represents unit fixed effects; $\beta_t$ represents time period fixed effects;
$\epsilon_{n,t}$ is the error term; $\text{Treat}_n$ is a dummy indicator of treatment; $\text{Post}_t$ is a dummy indicator for post-treatment periods; and $\tau$ is the treatment effect. We find that the magnitude and significance of the increase in consumption are similar to those obtained using SynthDiD.

\begin{figure}[t]
\begin{center}
\includegraphics[width=0.7\linewidth]{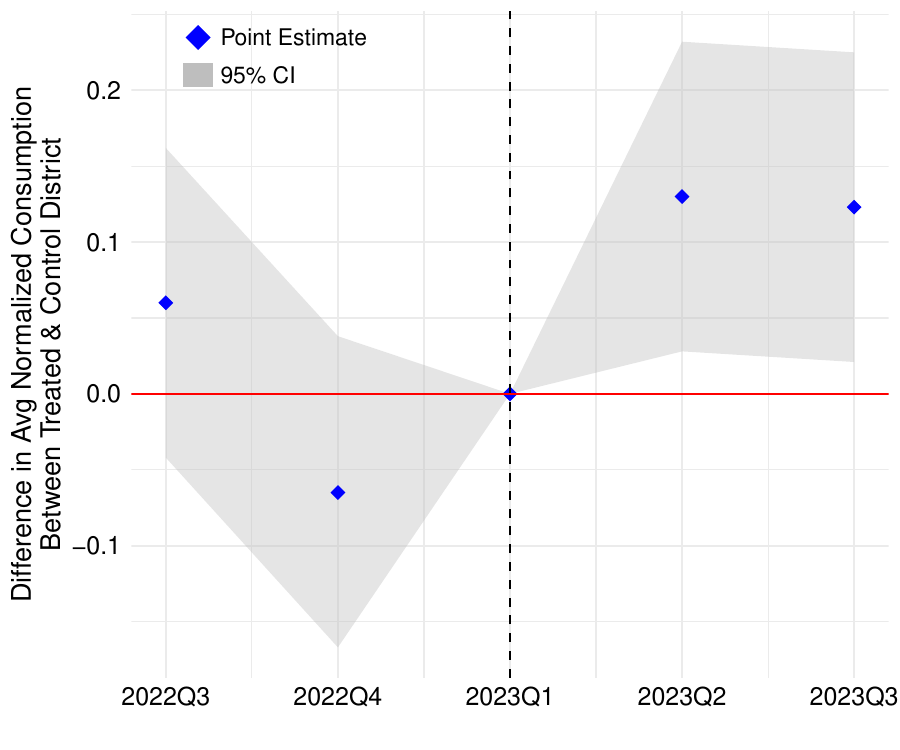}
\end{center}
\caption{\textbf{DiD Event Study.} This plot shows estimated treatment effects across time. Blue dots represent point estimates, and vertical bars denote 95\% confidence intervals. The black vertical dashed line shows the time period immediately preceding our 2023 Q2 deployment. As expected, the pre-treatment estimates are close to zero, while the post-treatment estimates are larger, indicating a positive impact on consumption attributable to our deployment.}
\label{fig:eventDiD}
\end{figure}

\subsubsection{Matching analysis.} Next, we employed a geographic matching design, which is more robust to geographically-varying unobserved confounders by restricting the DiD analysis to matching pairs of treatment and control facilities that are geographically close together. Specifically, we first restricted our sample to facilities within some distance of a district border (either 5km, 10km, or 15km) and that have a neighboring facility with the opposite treatment status; Fig.~\ref{fig:matchmap} shows the resulting set of facilities for the 15km cutoff. Then, we ran our DiD analysis restricted to matched facilities. Results for all three distances are shown in Table~\ref{tab:all_appendix}; they show a statistically significant increase in consumption for treated facilities that is consistent with our main analysis.

\begin{figure}[t]
\begin{center}
\includegraphics[width=0.5\linewidth]{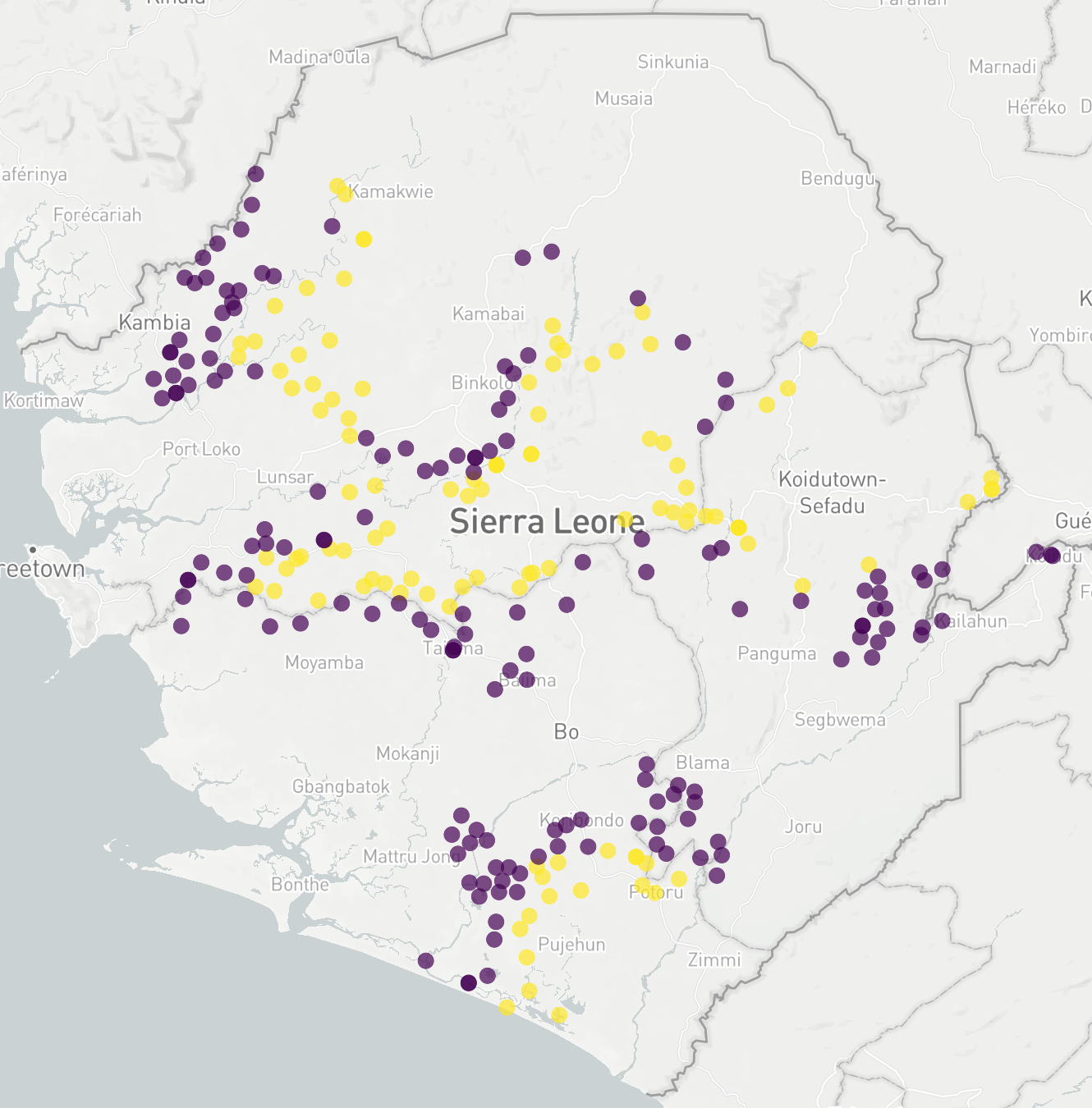}
\caption{\textbf{Map of Matched Border Facilities.} This map displays the geographic distribution of facilities included in the border matching analysis in the ``Matching (15km)'' row of Table~\ref{tab:all_appendix}. Purple dots represent control facilities and yellow dots represent treated facilities located within 15km of district borders. Shorter distances (5km, 10km) yield similar results.}
\label{fig:matchmap}
\end{center}
\end{figure}

\subsubsection{Imputation strategies.} Our main analysis simply dropped missing values; however, this may create concerns about bias from missing/unreliable data, and so we tested three different imputation strategies (based on low-rank completion, catchment population, and average demand) and found consistent treatment effects across all of them (see Table~\ref{tab:all_appendix}). Specifically:
\begin{itemize}
\item \textbf{Low-rank imputation (ImputedLowRank):} We used low-rank matrix completion~\cite{candes2012exact}, a standard approach for handling missing data in large matrices. We used the \textit{softImpute} R package with $\text{rank}=2$ and regularization parameter $\lambda=0.01$ (chosen using cross-validation) to impute missing consumption values. Table~\ref{tab:lambda_robustness} shows that our imputation results are robust to the choice of regularization parameter $\lambda$.
\item \textbf{Population-based imputation (ImputedPop):} We first estimated demand in proportion to each facility’s catchment population, and then imputed consumption as the minimum of the estimated demand and the computed allocation (we computed the allocation via the Excel tool in quarters prior to our tool's deployment, and using our tool otherwise).
\item \textbf{Average imputation (ImputedAvgConsump).} Third, using comprehensive data on unique quarter, facility, and product tuples, we imputed missing consumption values by assigning it to be the average quarterly consumption of the product across all facilities.
\end{itemize}
We found that our estimates are qualitatively similar to our main analysis, suggesting that our results are robust to different imputation strategies.

\begin{table}[t]
\caption{\textbf{Low-Rank Imputation with Different $\lambda$ Values.}}
\label{tab:lambda_robustness}
\centering
\begin{threeparttable}
\begin{tabular}{lcccc}
\toprule
 & \textbf{Coefficient} & \textbf{Std. Error} & \textbf{Observations} & \textbf{Improvement \%} \\
\midrule
\multicolumn{5}{l}{\textit{Dependent Variable = Normalized Consumption}} \\
\midrule
(1) $\lambda = 0.01$ & 0.037$^{**}$ & (0.014) & 5,455 & 15\% \\
(2) $\lambda = 0.1$  & 0.044$^{***}$ & (0.013) & 5,455 & 16\% \\
(3) $\lambda = 1$    & 0.035$^{**}$ & (0.012) & 5,455 & 13\% \\
(4) $\lambda = 5$    & 0.042$^{***}$ & (0.012) & 5,455 & 14\% \\
(5) $\lambda = 10$   & 0.039$^{**}$ & (0.012) & 5,455 & 16\% \\
\bottomrule
\end{tabular}
\begin{tablenotes}
\footnotesize
\item Notes: Standard errors in parentheses. \textsuperscript{*}$p < 0.05$, \textsuperscript{**}$p < 0.01$, \textsuperscript{***}$p < 0.001$. 
\end{tablenotes}
\end{threeparttable}
\end{table}

\subsubsection{Missingness analysis.} \label{ssec:missingness}

We checked that (1) our allocation tool did not drive differential missingness patterns, and (2) that products with differential missingness did not drive our results:
\begin{enumerate}
\item Missingness as the outcome variable: We used a missingness indicator as the dependent variable in our main specification; results are shown in the ``Missingness'' row of Table~\ref{tab:all_appendix}, and show no significant treatment effect. These results suggest that our intervention did not cause differential missing data, which could bias our results.
\item Analysis on a sub-sample with no missingness imbalance: We also re-ran our main analysis on a subset of products that showed no significant difference in missing data rates between the treatment and control groups after the treatment. Results are shown in the ``No Missingness Imbalance'' row of Table~\ref{tab:all_appendix}; they are consistent with our main analysis, showing a statistically significant 18\% increase in consumption.
\end{enumerate}
These results suggest that our findings are not driven by differential missing data patterns.

\subsubsection{Substitution.}
As discussed in \S\ref{sup:substitution}, facilities may substitute related products when stock of one product is low. As outlined in \S\ref{sup:substitution}, we merged substitutable products into clusters, and performed our analysis on (appropriately normalized) consumption of product clusters. 

To this end, we developed groupings of substitutable products based on conversations with two medical experts, and validated them through discussions with field workers. Table~\ref{tab:sublist} shows substitutable groups of products among the ones we allocated; direct substitutes are perfect, functional substitutes are nearly perfect, and therapeutic substitutes are more complicated but can reasonably be approximated as being perfect. Our discussions suggested that group 5 was context-dependent, so we performed two analyses---one with all seven groups (Method 1) and one omitting group 5 (Method 2). We re-ran our primary analysis using both groupings, which accounts for substitution effects according to our theoretical analysis. Our results (the ``Substitution'' rows in Table~\ref{tab:all_appendix}) show consistent and significant ATTs of 18\% or 20\%, suggesting that substitution behavior is unlikely to significantly affect our findings.

\begin{table}[t]
\centering \small
\caption{\textbf{Medicine Substitutions.} This table lists clinically similar products grouped by substitution type: Direct Substitutions (same ingredient but in different doses or forms), Functional Substitutions (different products with similar clinical purposes), and Therapeutic Substitutions (different medications for the same condition). These expert-provided substitution groups were used to test whether observed consumption increases could be driven by product substitution. From our conversations, substitution group (5) (marked with a $\dagger$) was context-dependent; thus, we performed checks both with and without this group.}
\begin{tabular}{l p{10cm}}
\toprule
Type of Substitution & Substitutes \\
\midrule
Direct Substitutions & (1) Paracetamol (Acetaminophen) 500mg, Tab; Paracetamol (Acetaminophen) 250mg, Dispersible, Tab \\
\addlinespace
Functional Substitutions & (2) Syringe, Luer, 2ml, Disposable, Pcs; Syringe, Luer, 10ml, Disposable, Pcs; Syringe, Luer, 20ml, Disposable, Pcs \\ & (3) Glucose (Dextrose) 5\%, IV Inj, 500ml, Soft Bag; Glucose (Dextrose) Hypertonic 50\%, IV Inj, 50ml, Bot \\ & (4) Cannula, IV, 18G, Short, Sterile, Disposable, Pcs; Cannula, IV, 24G, Short, Sterile, Disposable, Pcs \\
\addlinespace
Therapeutic Substitutions & (5)$^\dagger$ Neomycin \& Bacitracin 0.5\% \& 500IU/g, Ointment, 15g, Tube; Povidone Iodine 10\%, Solution, Bot; Chlorhexidine Gluconate 7.1\%, Gel \\ & (6) Oxytocin 10IU, Inj, Amp; Misoprostol 200mcg, Tab \\ & (7) Ampicillin 500mg, Pdr for IM/IV, Inj, Vial; Ceftriaxone 250mg, Pdr for Inj \\
\midrule
\multicolumn{2}{p{15.5cm}}{\small{\textbf{Note:} Every substitution requires a qualified healthcare professional to first assess the patient's specific diagnosis and clinical context, then manage the substitution by adjusting dosage and following clinical or administration procedures (e.g., dilution, etc.).}} \\
\bottomrule
\end{tabular}
\label{tab:sublist}
\end{table}

\subsubsection{Alternative control group (Alt. Control).}
We also performed our analysis using an alternative control group of 26 products that were concurrently allocated using a different, pre-existing mechanism (see Table~\ref{tab:medicine_consumption_treat0}). The consumption levels for these products can be used as a control group throughout our study period across all districts in the country using a staggered treatment---i.e., an advantage of this analysis is that it can be performed not just for the partial deployment in 2023 Q2, but also for the nationwide implementation starting in Q3. We used the \textit{sdid} package in STATA~\cite{clarke2023synthetic}, and found a consistent and statistically significant increase in consumption of 18\% (``Alt. Control'' row in Table~\ref{tab:all_appendix}).

\subsubsection{Local average treatment effect.} \label{ssec:LATE}

To disentangle the effect of compliance with our recommendations from the impact of our system, we performed a standard instrumental variables analysis to compute the Local Average Treatment Effect (LATE)~\cite{angrist1995identification}. Following standard practice, we used the government's random (binary) treatment assignment as the instrument, and compliance (measured as a continuous variable, described in \S\ref{sup:compliance}) as the treatment variable.

In general, IV analyses require two conditions to be satisfied. The first is relevance---our first-stage results in Table~\ref{tab:IVC} show a strong correlation between treatment assignment and compliance ($\text{coefficient} = 0.89$, $p < 0.001$), suggesting that the relevance assumption holds. The second is the exclusion restriction, which requires that the assignment to treatment affects outcomes only through compliance. This is highly plausible in our case. For consumption to change, either patient/provider behavior or supply chain logistics must change. Yet, patients and facility-level providers had no knowledge of our deployment (or more generally, NMSA's operations), and assignment alone did not alter logistics unless the district actually adopted our recommended allocation. Thus, we believe assignment to treatment would not influence consumption except through compliance, satisfying the exclusion restriction.

To estimate the LATE, we used Two-Stage Least Squares (2SLS), with the first-stage
\begin{equation}
\label{eq:first_stage}
D_n = \pi_1 Z_n + \pi_2 X_n + \mu_n,
\end{equation}
where $D_n$ is compliance, $Z_n$ is treatment assignment, and $X_n$ denotes covariates (indicators for the district, quarter, and facility type). Here, $n$ ranges over all (treated and control) facilities.

The second-stage regression is given by
\begin{equation} \label{eq:second_stage}
Y_n = \beta_1 D_n + \beta_2 X_n + \epsilon_n,
\end{equation}
where $Y_n$ is the outcome variable (i.e., normalized consumption). Table~\ref{tab:IVC} shows results, yielding a significant increase of 0.12 standard deviations. To interpret this result, we make an additional assumption that the first-stage relationship between assignment and compliance is monotonic and linear---then, this result can be interpreted as ``a one-unit increase in compliance induced by the instrument yields a 0.12 standard deviation increase in outcome,'' which can be translated to a 37\% reduction in unmet demand among compliers.\footnote{We can recover the IV coefficient in percentage terms: The implied percentage increase in consumption for each product $p$ is computed as $\%\Delta^{\text{LATE}}_p = 100 \times \hat{\beta}^{\text{norm}} \times \frac{s_p}{\bar{Y}_p}$, where $\hat{\beta}^{\text{norm}}$ is the 2SLS coefficient on the standardized outcome, $s_p$ is the product-specific standard deviation, and $\bar{Y}_p$ is its baseline mean consumption. Then, we take the simple average of these product-level effects to obtain an overall increase of approximately 37\% for compliers.}

\begin{table}[t]
\centering
\caption{\textbf{LATE IV Result.} Column 1 shows the OLS estimate of treatment compliance (continuous measure defined in Supplement \S\ref{sup:compliance}) on normalized consumption. Column 2 presents the first-stage regression of treatment assignment (instrument) on compliance. Column 3 reports the IV estimate, which identifies the causal effect for compliers.}
\label{tab:IVC}
\begin{tabular}{l*{3}{r}}
\hline\hline
&\multicolumn{1}{c}{\parbox{2.5cm}{\centering (1)\\OLS}}&\multicolumn{1}{c}{\parbox{2.5cm}{\centering (2)\\First-Stage}}&\multicolumn{1}{c}{\parbox{2.5cm}{\centering (3)\\IV}}\\
\cline{2-4}
Dependent variable: & \parbox{2.5cm}{\centering Normalized \\ Consumption} & \parbox{2.5cm}{\centering Treated Complier} & \parbox{2.5cm}{\centering Normalized \\ Consumption} \\
\hline
Treated Complier                      & 0.075** & \multicolumn{1}{r}{--}      & \textbf{0.115**}   \\
& (0.023) & & (0.038) \\
IV (treatment assignment)             & \multicolumn{1}{r}{--}    & \multicolumn{1}{r}{0.887***}      & \multicolumn{1}{r}{--}    \\
  & & (0.016) & \\
\hline
Facility fixed effect                 & YES                       & YES                         & YES                 \\
Quarter fixed effect                  & YES                       & YES                         & YES                 \\
Facility Type fixed effect            & YES                       & YES                         & YES                 \\
\hline
Observations                          & \multicolumn{1}{r}{5,290} & \multicolumn{1}{r}{5,290} & \multicolumn{1}{r}{5,290} \\
\(R^2\)                               & \multicolumn{1}{r}{0.067}   & \multicolumn{1}{r}{-}      & \multicolumn{1}{r}{0.067}   \\
First-Stage \(F\)-Stat                & \multicolumn{1}{r}{--}    & \multicolumn{1}{r}{3,318}  & \multicolumn{1}{r}{--}      \\
\hline\hline
\multicolumn{4}{l}{\textit{Notes: Standard errors in parentheses. \textsuperscript{*}p$<$0.05, \textsuperscript{**}p$<$0.01, \textsuperscript{***}p$<$0.001.}} \\
\end{tabular}
\end{table}

\subsubsection{Stockouts.} We also assessed the impact of our system on the number of facility-product stockouts. While reducing stockouts might seem like a natural objective, optimizing for fewer stockouts produces highly undesirable allocations---e.g., an optimal strategy is to allocate zero supply to a small number of high-volume facilities, thereby ensuring that the remaining facilities are well-stocked. Using SynthDiD but replacing our unit-level outcomes with binary stockout indicators, we found a directional reduction of 4.6\% in stockouts but it is not statistically significant ($p\approx0.12$) (``Stockouts'' rows in Table~\ref{tab:all_appendix})---i.e., our system does not inadvertently increase stockouts.

\subsection{Cost-Effectiveness Analysis}
\label{sup:cost}

Our system was highly cost-effective. By design, it did not induce new costs for warehousing or logistics; instead, it integrated directly into the existing supply chain infrastructure, replacing the previous manual planning process without requiring additional physical resources or personnel. The primary cost was a nationwide \$30 USD monthly server fee for hosting the prediction and optimization pipeline and the interface by which NMSA workers interact with our system.

We conducted an Incremental Cost-Effectiveness Ratio (ICER) analysis using Disability-Adjusted Life Years (DALYs) as our measure of effectiveness; one DALY represents one year of healthy life, with one year of life with some disease, disability, or other health condition represented by some fraction less than one. To this end, we matched each medicine in our study to the condition it treats, and obtained the number of DALYs gained by treating that condition according to the World Health Organization~\cite{WHO_2024_GlobalHealthEstimates}. For products that can treat multiple conditions, we used a conservative approach by using the minimum DALY value that one unit of the medicine could avert; this strategy ensures that our effectiveness estimate is a lower bound. The ICER is then calculated as the incremental cost of the intervention $\Delta C$ divided by the incremental health effect $\Delta E$. For every product $m\in[M]$, let
$\bar{q}_m$ denote the minimum number of DALYs that can be saved when one extra unit of product $m$ is consumed, and let $F=\text{360 USD}$ denote the yearly fixed cost of hosting our system for the entire nation. Then, we have
\begin{align*}
\mathrm{ICER}
=
\frac{\Delta C}{\Delta E} = 
\frac{F}{\delta\sum_{m=1}^M\bar{q}_m},
\end{align*}
where $\delta$ is the treatment effect from our main impact evaluation. We used $\delta = 0.19$, representing the 19\% average increase in medicine consumption attributed to our system, and we computed the yearly increase in DALYs from increased consumption of distributed products $\sum_{m=1}^M\bar{q}_m=361.3$. This analysis yields an average ICER of only \$5.24 USD per DALY. This is much smaller than the ICER of typical interventions that are considered cost-effective 
(\$50{,}000--\$100{,}000)~\cite{ICER2019ThresholdRanges,HERC_CEA,neumann2014updating}. While our ICER is extremely low, it is consistent with recent literature on the ``unreasonable effectiveness of algorithms,'' which finds that algorithms can significantly improve resource allocation at a very low marginal cost~\cite{ludwig2024unreasonable}.

\begin{table}
\footnotesize
\centering
\caption{\textbf{Products Allocated via Our Tool.} Statistics for monthly facility-level consumption between Jan 2020 and Nov 2023. Missing values are excluded.}
\label{tab:medicine_consumption_treat1}
\begin{tabular}{p{8cm} r r r}
\toprule
\textbf{Product Name} & \textbf{Mean} & \textbf{Median} & \textbf{Std. Dev.} \\
\midrule

\multicolumn{4}{l}{\textbf{Medicines}\hfill} \\
\midrule
Albendazole 400mg, Tab & 124.89 & 90 & 205.83 \\
Aluminium Hydroxide 500mg, Tab & 169.70 & 85 & 246.70 \\
Ampicillin 500mg, Pdr for IM/IV, Inj, Vial & 65.55 & 34 & 203.59 \\
Benzyl Benzoate 25\%, Emulsion, 100ml, Bot & 12.76 & 1 & 65.00 \\
Ceftriaxone 250mg, Pdr for Inj & 113.39 & 25 & 226.94 \\
Chlorhexidine Gluconate 7.1\%, Gel & 3.58 & 2 & 16.11 \\
Ciprofloxacin 500mg, Tab & 171.57 & 100 & 268.18 \\
Erythromycin 125mg/5ml, Pdr for Susp, 100ml, Bot & 89.27 & 15 & 224.83 \\
Ferrous Sulphate 200mg, Tab & 512.29 & 300 & 673.05 \\
Folic Acid 5mg, Tab & 509.04 & 398 & 745.81 \\
Glucose (Dextrose) 5\%, IV Inj, 500ml, Soft Bag & 40.73 & 6 & 155.03 \\
Glucose (Dextrose) Hypertonic 50\%, IV Inj, 50ml, Bot & 9.60 & 2 & 27.87 \\
Methyldopa 250mg, Tab & 73.73 & 45 & 168.29 \\
Metoclopramide HCl 10mg, Tab & 147.31 & 30 & 306.85 \\
Misoprostol 200mcg, Tab & 31.91 & 10 & 144.94 \\
Neomycin \& Bacitracin 0.5\% \& 500IU/g, Ointment, 15g, Tube & 4.70 & 2 & 17.82 \\
Oral Rehydration Salts (ORS), Sachet & 121.27 & 100 & 187.75 \\
Oxytocin 10IU, Inj, Amp & 19.61 & 10 & 86.72 \\
Paracetamol (Acetaminophen) 250mg, Dispersible, Tab & 464.35 & 400 & 475.49 \\
Paracetamol (Acetaminophen) 500mg, Tab & 562.41 & 455 & 635.66 \\
Povidone Iodine 10\%, Solution, Bot & 8.64 & 1 & 84.45 \\
Prednisolone 5mg, Tab & 65.24 & 12 & 134.54 \\
Salbutamol 100mcg/dose, Aerosol, Inhaler & 1.50 & 0 & 14.28 \\
Water for Injection 10ml, Inj, Amp & 33.75 & 20 & 124.35 \\
Zinc Sulphate 20mg, Dispersible, Tab & 221.20 & 130 & 459.66 \\

\midrule
\multicolumn{4}{l}{\textbf{Medical Supplies \& Equipment}\hfill} \\
\midrule
Apron, Plastic, Disposable, Pcs & 24.69 & 10 & 124.65 \\
Cannula, IV, 18G, Short, Sterile, Disposable, Pcs & 14.90 & 10 & 48.70 \\
Cannula, IV, 24G, Short, Sterile, Disposable, Pcs & 16.24 & 10 & 61.55 \\
Envelope, Dispensing, Plastic, 10cm x 7cm, Pcs & 150.11 & 100 & 184.44 \\
Glove, Exam, Latex, Medium, Nonsterile, Disposable, Pcs & 173.18 & 100 & 338.22 \\
IV Giving Set, Pcs & 19.23 & 11 & 49.27 \\
Mask, Surgical, Pcs & 47.73 & 12 & 215.93 \\
Rapid test kit, Pregnancy, Pcs & 13.73 & 10 & 38.64 \\
Syringe, Luer, 10ml, Disposable, Pcs & 37.65 & 17 & 188.69 \\
Syringe, Luer, 20ml, Disposable, Pcs & 25.80 & 10 & 75.47 \\
Syringe, Luer, 2ml, Disposable, Pcs & 41.23 & 25 & 97.43 \\
\bottomrule
\end{tabular}
\end{table}

\begin{table}
\footnotesize
\centering
\caption{\textbf{Products Allocated via Other Mechanisms.} Statistics for monthly facility-level consumption between Jan 2020 and Nov 2023. Missing values are excluded. Data from these products were only used to help train our prediction model and for one of our evaluations (staggered SynthDiD using product-level controls, called ``Alt. Control'' in Table~\ref{tab:att_tabular}).}
\label{tab:medicine_consumption_treat0}
\begin{tabular}{p{8cm} r r r}
\toprule
\textbf{Product Name} & \textbf{Mean} & \textbf{Median} & \textbf{Std. Dev.} \\
\midrule
\multicolumn{4}{l}{\textbf{Medicines}\hfill} \\
\midrule
Amoxicillin 250mg, Dispersible, Tab & 592.01 & 456 & 734.22 \\
Calcium Gluconate 100mg/ml, Inj, 10ml, Amp & 5.20 & 0 & 52.47 \\
Ceftriaxone 1g, Pdr for Inj, Vial & 164.45 & 20 & 502.68 \\
Chlorhexidine Gluconate 5\%, Solution, 1000ml, Bot & 6.38 & 1 & 116.60 \\
Cloxacillin 500mg, Tab/Cap & 300.74 & 100 & 550.75 \\
Epinephrine HCl (Adrenaline) 1mg/ml, Inj, 1ml, Amp & 9.25 & 0 & 78.06 \\
Ibuprofen 400mg, Tab & 284.31 & 200 & 346.74 \\
Jadelle & 8.07 & 5 & 22.51 \\
Levonorgestrel (Emergency Contraceptive) 1.5mg, Tab & 4.87 & 0 & 15.75 \\
Levoplant & 5.33 & 1 & 17.90 \\
Nystatin 500,000IU, Tab & 8.74 & 4 & 24.61 \\
Progesterone-Only (Microlut) Levonorgestrel 30mcg, Tab, Cycle & 19.63 & 3 & 145.70 \\
Sodium Chloride (Normal Saline) 0.9\%, IV Inj, 500ml, Bot & 12.27 & 8 & 50.45 \\

\midrule
\multicolumn{4}{l}{\textbf{Medical Supplies \& Equipment}\hfill} \\
\midrule
Bandage, Elastic, 8cm x 4m, Roll & 10.75 & 1 & 43.08 \\
Blade, Surgical, No. 22, Sterile, Disposable, Pcs & 18.23 & 5 & 91.90 \\
Cannula, IV, 22G, Short, Sterile, Disposable, Pcs & 14.33 & 6 & 51.38 \\
Condom, Male & 114.40 & 60 & 258.42 \\
Copper-Containing Device (Copper T or Copper 7 or IUD) & 7.41 & 0 & 28.42 \\
Cotton Wool, Absorbent, 500g, Roll & 2.73 & 1 & 17.93 \\
Glove, Surgical, Size 7.5, Sterile, Disposable, Pair & 49.34 & 13 & 112.43 \\
Glove, Surgical, Size 8, Sterile, Disposable, Pair & 37.98 & 10 & 108.17 \\
Needle, Hypodermic, Luer, 21G, Sterile, Disposable & 69.56 & 50 & 135.77 \\
Needle, Hypodermic, Luer, 23G, Sterile, Disposable & 62.09 & 50 & 97.86 \\
Sanitary Pads, Pcs & 9.56 & 2 & 20.52 \\
Syringe, Luer, 5ml, Disposable & 57.06 & 45 & 111.68 \\
Tube, Asp/Feed, CH12, Sterile, Disposable, Pcs & 4.16 & 0 & 19.59 \\
\bottomrule
\end{tabular}
\end{table}

\begin{table}[t]
\caption{\textbf{ Balance Table: District Characteristics by Treatment Status}}
\label{tab:balance_district}
\centering
\resizebox{\textwidth}{!}{
\begin{tabular}{lcccccc}
\toprule
& \multicolumn{2}{c}{Control} & \multicolumn{2}{c}{Treatment} & \multicolumn{2}{c}{Difference} \\
\cmidrule(lr){2-3} \cmidrule(lr){4-5} \cmidrule(lr){6-7}
Variable & Mean & (SD) & Mean & (SD) & p-value & \begin{tabular}[c]{@{}c@{}}FDR Adj. \\ p-value\end{tabular} \\
\midrule
Days of service delivery & 0.46 & (0.17) & 0.38 & (0.17) & 0.35 & 0.71 \\
Hours of service delivery & 1.40 & (0.56) & 1.26 & (0.55) & 0.14 & 0.71 \\
Facilities where women give birth & 6.44 & (2.54) & 5.23 & (2.47) & 0.19 & 0.71 \\
Availability of basic emergency obstetric and neonatal care & 0.43 & (0.38) & 0.19 & (0.18) & 0.35 & 0.71 \\
Availability of priority drugs & 3.62 & (1.39) & 3.12 & (1.58) & 0.78 & 0.88 \\
Availability of all tracer drugs & 1.84 & (1.41) & 2.40 & (1.78) & 0.24 & 0.71 \\
Availability of vaccines & 6.32 & (2.31) & 5.31 & (2.46) & 0.68 & 0.88 \\
Vaccines storage: refrigerators $2$--$8\,^{\circ}\mathrm{C}$ & 1.72 & (2.72) & 1.06 & (2.14) & 0.87 & 0.88 \\
Availability of communication equipment & 3.65 & (2.56) & 2.56 & (1.50) & 0.41 & 0.71 \\
Access to various forms of communication & 3.65 & (2.56) & 2.56 & (1.50) & 0.41 & 0.71 \\
Total proportion of facilities carrying out safe health care waste disposal & 5.00 & (2.10) & 3.86 & (1.66) & 0.68 & 0.88 \\
Availability of basic equipment & 1.99 & (0.67) & 2.00 & (0.91) & 0.31 & 0.71 \\
Availability of Standard Treatment Guidelines & 3.63 & (1.97) & 2.53 & (2.56) & 0.44 & 0.71 \\
Outpatient caseload (median per facility) & 0.63 & (0.52) & 0.51 & (0.21) & 0.88 & 0.88 \\
Facilities with community health workers & 5.44 & (2.54) & 4.84 & (2.82) & 0.78 & 0.88 \\
Average number of community health workers & 0.69 & (0.24) & 0.64 & (0.33) & 0.57 & 0.85 \\
Average health workers per facility & 0.47 & (0.50) & 0.41 & (0.38) & 0.81 & 0.88 \\
Population Share & 6.95 & (3.13) & 4.72 & (2.45) & 0.15 & 0.71 \\
\hline
Human Development Index (HDI) & 0.45 & (0.05) & 0.43 & (0.02) & 0.16 & 0.21 \\
Health Index (health dimension of HDI based on life expectancy at birth) & 0.63 & (0.05) & 0.64 & (0.06) & 0.66 & 0.66\\
Income Index (income dimension of HDI based on Gross National Income per capita) & 0.41 & (0.06) & 0.38 & (0.02) & 0.11 & 0.21\\
\bottomrule
\end{tabular}
}
\begin{minipage}{\textwidth}
\footnotesize
\textit{Notes:} Means and standard deviations are shown as Mean (SD). p-values are from two-sample Welch’s t-tests. Multiple testing adjustments control the false discovery rate (FDR) using the Benjamini–Hochberg (BH) procedure.
\end{minipage}
\end{table}

\begin{table}
\scriptsize
\centering
\caption{\textbf{Percentage of Missing Data by Product.} We report missingness rates in consumption data for each product across control and treated facilities in pre- and post-treatment periods. The $p$-values are from  two-sample proportion tests of significant differences in missingness rates between control and treated groups, without adjusting for multiple hypothesis testing.}
\label{tab:balanceMiss}
\begin{tabular}{
>{\raggedright\arraybackslash}p{5.5cm}
>{\centering\arraybackslash}p{1.2cm}
>{\centering\arraybackslash}p{1.2cm}
>{\centering\arraybackslash}p{1.2cm}
>{\centering\arraybackslash}p{1.2cm}
>{\centering\arraybackslash}p{1.2cm}
>{\centering\arraybackslash}p{1.2cm}
}
\toprule
& \multicolumn{3}{c}{\textbf{Pre-Treated}} & \multicolumn{3}{c}{\textbf{Post-Treated}} \\
\cmidrule(l{2pt}r{2pt}){2-4} \cmidrule(l{2pt}r{2pt}){5-7}
\textbf{Product} &
\makecell{\small Control \\ \small (\%)} &
\makecell{\small Treated \\ \small (\%)} &
\makecell{\small \textit{p}-value} &
\makecell{\small Control \\ \small (\%)} &
\makecell{\small Treated \\ \small (\%)} &
\makecell{\small \textit{p}-value} \\
\midrule
Albendazole 400mg, Tab & 0.00 & 0.00 & 1.00 & 0.10 & 0.00 & 1.00 \\
Aluminium Hydroxide 500mg, Tab & 97.60 & 99.00 & 0.15 & 95.20 & 91.70 & 1.00 \\
Ampicillin 500mg, Pdr for IM/IV, Inj, Vial & 14.60 & 24.20 & 0.06 & 5.50 & 10.10 & 0.00\textsuperscript{***} \\
Apron, Plastic, Disposable, Pcs & 73.80 & 42.60 & 0.00\textsuperscript{***} & 75.90 & 55.20 & 0.00\textsuperscript{***} \\
Benzyl Benzoate 25\%, Emulsion, 100ml, Bot & 98.50 & 98.70 & 1.00 & 70.20 & 83.80 & 0.00\textsuperscript{***} \\
Cannula, IV, 18G, Short, Sterile, Disposable, Pcs & 42.50 & 32.30 & 0.02\textsuperscript{*} & 66.40 & 53.60 & 0.00\textsuperscript{***} \\
Cannula, IV, 24G, Short, Sterile, Disposable, Pcs & 39.30 & 26.40 & 0.00\textsuperscript{***} & 6.50 & 3.60 & 1.00 \\
Ceftriaxone 250mg, Pdr for Inj & 96.70 & 99.40 & 0.66 & 88.90 & 88.20 & 0.75 \\
Chlorhexidine Gluconate 7.1\%, Gel & 28.70 & 15.90 & 0.00\textsuperscript{***} & 3.90 & 4.80 & 0.48 \\
Ciprofloxacin 500mg, Tab & 90.60 & 87.90 & 0.19 & 67.90 & 22.30 & 0.00\textsuperscript{***} \\
Envelope, Dispensing, Plastic, 10cm x 7cm, Pcs & 55.40 & 32.70 & 0.00\textsuperscript{***} & 70.90 & 52.90 & 0.00\textsuperscript{***} \\
Erythromycin 125mg/5ml, Pdr for Susp, 100ml, Bot & 99.50 & 99.00 & 0.43 & 82.70 & 85.40 & 0.32 \\
Ferrous Sulphate 200mg, Tab & 75.70 & 76.90 & 0.71 & 83.40 & 54.20 & 0.00\textsuperscript{***} \\
Folic Acid 5mg, Tab & 0.10 & 0.30 & 0.15 & 1.80 & 6.10 & 0.00\textsuperscript{***} \\
Glove, Exam, Latex, Medium, Nonsterile, Disposable, Pcs & 4.70 & 6.90 & 0.04\textsuperscript{*} & 6.90 & 4.30 & 0.05\\
Glucose (Dextrose) 5\%, IV Inj, 500ml, Soft Bag & 91.20 & 96.50 & 0.13 & 76.10 & 77.60 & 0.65 \\
Glucose (Dextrose) Hypertonic 50\%, IV Inj, 50ml, Bot & 99.60 & 99.40 & 0.64 & 87.00 & 92.90 & 0.38 \\
IV Giving Set, Pcs & 5.00 & 4.20 & 0.46 & 4.30 & 4.80 & 0.69 \\
Mask, Surgical, Pcs & 79.10 & 74.00 & 1.00 & 83.70 & 77.20 & 1.00 \\
Methyldopa 250mg, Tab & 0.20 & 0.50 & 0.30 & 0.30 & 0.30 & 1.00 \\
Metoclopramide HCl 10mg, Tab & 90.10 & 88.00 & 0.33 & 85.10 & 84.00 & 0.65 \\
Misoprostol 200mcg, Tab & 60.60 & 67.90 & 1.00 & 70.70 & 75.40 & 0.10 \\
Neomycin \& Bacitracin 0.5\% \& 500IU/g, Ointment, 15g, Tube & 97.60 & 99.00 & 0.15 & 69.30 & 68.70 & 0.83 \\
Oral Rehydration Salts (ORS), Sachet & 0.00 & 0.00 & 1.00 & 0.20 & 0.80 & 0.10 \\
Oxytocin 10IU, Inj, Amp & 1.50 & 1.30 & 0.86 & 3.30 & 3.70 & 0.68 \\
Paracetamol (Acetaminophen) 250mg, Dispersible, Tab & 0.20 & 0.20 & 0.64 & 0.10 & 0.20 & 1.00 \\
Paracetamol (Acetaminophen) 500mg, Tab & 38.20 & 28.40 & 0.01\textsuperscript{*} & 12.10 & 21.00 & 0.00\textsuperscript{**} \\
Povidone Iodine 10\%, Solution, Bot & 89.70 & 90.00 & 0.91 & 5.50 & 20.10 & 0.00\textsuperscript{***} \\
Prednisolone 5mg, Tab & 100.00 & 99.00 & 0.55 & 93.60 & 90.70 & 0.12 \\
Rapid test kit, Pregnancy, Pcs & 69.80 & 61.60 & 0.49 & 62.30 & 59.20 & 0.35 \\
Salbutamol 100mcg/dose, Aerosol, Inhaler & 89.10 & 76.80 & 0.00\textsuperscript{***} & 40.30 & 44.60 & 0.20 \\
Syringe, Luer, 10ml, Disposable, Pcs & 80.10 & 54.20 & 0.00\textsuperscript{***} & 57.90 & 29.00 & 0.00\textsuperscript{***} \\
Syringe, Luer, 20ml, Disposable, Pcs & 70.40 & 46.00 & 0.00\textsuperscript{***} & 61.50 & 31.80 & 0.00\textsuperscript{***} \\
Syringe, Luer, 2ml, Disposable, Pcs & 75.00 & 47.00 & 0.00\textsuperscript{***} & 56.80 & 31.20 & 0.00\textsuperscript{***} \\
Water for Injection 10ml, Inj, Amp & 25.40 & 25.20 & 0.95 & 32.30 & 30.80 & 0.68 \\
\addlinespace
Zinc Sulphate 20mg, Dispersible, Tab & 0.00 & 0.00 & 1.00 & 0.20 & 0.30 & 0.64 \\
\bottomrule
\end{tabular}
\begin{minipage}{\textwidth}
\footnotesize
\textit{Notes:} Significance levels: \textsuperscript{*}p$<$0.05, \textsuperscript{**}p$<$0.01, \textsuperscript{***}p$<$0.001.
\end{minipage}
\end{table}

\end{document}